
\documentclass{article}
\pdfminorversion=4
\usepackage{microtype}
\usepackage{graphicx}
\usepackage{subfigure}
\usepackage{booktabs} 
\usepackage{dsfont}
\usepackage{amssymb}
\usepackage{amsmath}
\usepackage{graphicx}
\usepackage{epstopdf}
\usepackage{url}
\usepackage{appendix}
\usepackage{amsthm}
\usepackage{thmtools}
\usepackage{thm-restate}
\usepackage{float}
\RequirePackage[loading]{tracefnt}
\usepackage{hyperref}

\newcommand{\indicator}[1]{\mathds{1}{#1}}

\DeclareMathOperator*{\argmax}{argmax}
\DeclareMathOperator*{\argmin}{argmin}
\DeclareMathOperator{\mE}{\mathbb{E}}
\DeclareMathOperator{\mV}{\mathbb{V}}
\DeclareMathOperator{\mX}{\mathcal{X}}
\DeclareMathOperator{\mY}{\mathcal{Y}}

\DeclareMathOperator{\mCg}{\indicator{\{\hat{c}> M\}}}
\DeclareMathOperator{\mCs}{\indicator{\{\hat{c} \leq M\}}}
\DeclareMathOperator{\wa}{w^{\alpha}_{i\bar{y}}}
\DeclareMathOperator{\wb}{w^{\beta}_{i}}
\DeclareMathOperator{\wg}{w^{\gamma}_{i}}

\DeclareMathOperator{\wxa}{w^{\alpha}_{x\bar{y}}}
\DeclareMathOperator{\wxya}{w^{\alpha}_{xy}}
\DeclareMathOperator{\wxb}{w^{\beta}_{xy}}
\DeclareMathOperator{\wxg}{w^{\gamma}_{xy}}

\DeclareMathOperator{\wva}{w^{\alpha}}
\DeclareMathOperator{\wvb}{w^{\beta}}
\DeclareMathOperator{\wvg}{w^{\gamma}}

\DeclareMathOperator{\rp}{R^{\mathbf{w}}_{xy}(\pi)}
\DeclareMathOperator{\Rw}{\hat{R}^{\mathbf{w}}}
\DeclareMathOperator{\Rwi}{\hat{R}^{\mathbf{w}}_i}


\newtheorem*{definition}{Definition}
\newtheorem{condition}{Condition}

\newcommand{\blendnameNS}{Continuous Adaptive Blending}
\newcommand{\classname}{Interpolated Counterfactual Estimator Family}
\newcommand{\blendname}{\blendnameNS\ }
\newcommand{\blendabbrNS}{CAB}
\newcommand{\blendabbr}{\blendabbrNS\ }
\newcommand{\newy}{\bar{y}}
\newcommand{\Rcab}{\hat{R}_{\blendabbrNS}}
\newcommand{\Rcabdr}{\hat{R}_{CABDR}}


\usepackage[accepted]{icml2019}

\icmltitlerunning{CAB: Continuous Adaptive Blending for Policy Evaluation and Learning}

\begin{document}

\twocolumn[
\icmltitle{CAB: Continuous Adaptive Blending for Policy Evaluation and Learning}



\icmlsetsymbol{equal}{*}

\begin{icmlauthorlist}
\icmlauthor{Yi Su}{equal,cornell}
\icmlauthor{Lequn Wang}{equal,cornell}
\icmlauthor{Michele Santacatterina}{tripods}
\icmlauthor{Thorsten Joachims}{cornell}
\end{icmlauthorlist}

\icmlaffiliation{cornell}{Cornell University, Ithaca, USA}
\icmlaffiliation{tripods}{Cornell TRIPODS Center for Data Science, Ithaca, USA}

\icmlcorrespondingauthor{Yi Su}{ys756@cornell.edu}
\icmlcorrespondingauthor{Lequn Wang}{lw633@cornell.edu}
\icmlcorrespondingauthor{Michele Santacatterina}{santacatterina@cornell.edu}
\icmlcorrespondingauthor{Thorsten Joachims}{tj@cs.cornell.edu}
\icmlkeywords{Information Retrieval, Recommender System, Causality, Ranking and Preference Learning}

\vskip 0.3in
]


\printAffiliationsAndNotice{\icmlEqualContribution} 

\begin{abstract}
The ability to perform offline A/B-testing and off-policy learning using logged contextual bandit feedback is highly desirable in a broad range of applications, including recommender systems, search engines, ad placement, and personalized health care. Both offline A/B-testing and off-policy learning require a counterfactual estimator that evaluates how some new policy would have performed, if it had been used instead of the logging policy. 
In this paper, we present and analyze a family of counterfactual estimators which subsumes most estimators proposed to date. Most importantly, this analysis identifies a new estimator -- called \blendname (\blendabbrNS) -- which enjoys many advantageous theoretical and practical properties. In particular, it can be substantially less biased than clipped Inverse Propensity Score (IPS) weighting and the Direct Method, and it can have less variance than Doubly Robust and IPS estimators. In addition, it is sub-differentiable such that it can be used for learning, unlike the SWITCH estimator. Experimental results show that \blendabbr provides excellent evaluation accuracy and outperforms other counterfactual estimators in terms of learning performance.
\end{abstract}

\section{Introduction}
\label{introduction}
Contextual bandit feedback is ubiquitous in a wide range of intelligent systems that interact with their users through the following process. The system observes a context, takes an action, and then observes feedback under the chosen action. The logs of search engines, recommender systems, ad-placement systems, and many other systems contain terabytes of this partial-information feedback data, and it is highly desirable to use this historic data for offline evaluation and learning. 
What makes evaluation and learning challenging, however, is that we only get to see the feedback for the chosen action, but we do not observe how the user would have responded if the system had taken a different action. This means the feedback is both partial (e.g. only observed for the selected action) and biased (e.g. by the choices of the policy that logged the data), which makes batch learning from contextual bandit feedback substantially different from typical supervised learning, where the correct label and a loss function provide full-information feedback.  

Both learning and evaluation can be viewed as examples of \textit{counterfactual reasoning}, where we need to estimate from the historic log data how well some other policy would have performed, if we had used it instead of the policy that logged the data. Three main approaches have been proposed for this counterfactual or off-policy evaluation problem. First, the Direct Method (DM) \cite{schafer1997analysis, rubin2004multiple, dudik2011doubly, little2019statistical} uses regression to learn a model of the reward and imputes the missing feedback. Second, inverse propensity score (IPS) weighting \cite{horvitz1952generalization, strehl2010learning} models the selection bias in the assignment mechanism and directly provides an unbiased estimate of the quality of a policy under suitable common support conditions. Both approaches are complementary and have different strengths and drawbacks. On the one hand, DM typically has low variance but can lead to highly biased results due to model misspecification. On the other hand, since we are typically controlling the logging policy and can log propensities, IPS-based methods can be provably unbiased but often suffer from large variance when the IPS weights are large. The third class is a hybrid of them. The most prominent one is the Doubly Robust (DR) estimator \cite{robins1995semiparametric, kang2007demystifying, dudik2011doubly}, which is based on DM but also uses IPS weighting and an additive control variate to reduce variance.

Generalizing these existing counterfactual estimators, we present a parametric family of estimators for off-policy evaluation that covers most of the off-policy estimators proposed to date --- including IPS \cite{horvitz1952generalization}, clipped IPS \cite{strehl2010learning}, DM \cite{dudik2011doubly}, DR \cite{dudik2011doubly} and its MRDR variant \cite{fara2018MRDR}, SWITCH \cite{wang2016optimal}, and Static Blending (SB) \cite{thomas2016data}. Providing a general bias-variance analysis for this family of estimators, we find that there is a particular new estimator in this family that has many desirable properties. We call this new estimator \blendname (\blendabbrNS) and show how it blends the complementary strengths of DM and IPS, thus providing an effective tool for optimizing the bias of DM against the variance of IPS. Compared to existing estimators, we find that \blendabbr can be substantially less biased than clipped IPS and DM while having lower variance compared to IPS and DR estimators. Furthermore, compared to estimators that perform static blending (SB) \cite{thomas2016data}, \blendabbr is adaptive to the IPS weights and handles violations of the support condition gracefully. Unlike SWITCH \cite{wang2016optimal}, \blendabbr is sub-differentiable which allows its use as the training objective in off-policy learning algorithms like POEM \cite{Swaminathan/Joachims/15c} and BanditNet \cite{Joachims/etal/18a}. Finally, unlike the DR estimator, \blendabbr can be used in off-policy Learning to Rank (LTR) algorithms like \cite{joachims2017unbiased}, and \blendabbr is specifically designed to control the bias/variance trade-off. We evaluate \blendabbr both theoretically and empirically. In particular, we present theoretical results that characterize the bias and variance of \blendabbrNS. Furthermore, we provide an extensive empirical evaluation of \blendabbr on both contextual-bandit problems and partial-information ranking problems, including real-world data from Amazon Music.

\section{Off-policy Evaluation in Contextual Bandits}

Before presenting our general family of counterfactual estimators, we begin with a formal definition of off-policy evaluation and learning in the contextual-bandit setting. To keep notation and exposure simple, we focus on the contextual bandit setting and do not explicitly consider other partial-information settings like counterfactual LTR \cite{joachims2017unbiased}. However, most estimators can be translated into that setting as well, and we discuss further details in the context of the ranking experiments in Section~\ref{sec:experiments} and in Appendix~\ref{sec:ltr}.

\subsection{Contextual-Bandit Setting and Learning}

In the contextual-bandit setting, a context $x \in \mathcal{X}$ (e.g., user profile, query) is drawn i.i.d.\ from some unknown $P(\mathcal{X})$, the deployed (stochastic) policy $\pi_0(y|x)$ then selects an action $y \in \mathcal{Y}$, and the system receives feedback $r\sim D(r|x,y)$ for this particular (context, action) pair. However, we do not observe feedback for any of the other actions. 
The logged contextual bandit data we get from logging policy $\pi_0$ is of the form 
\begin{equation}
  \mathcal{S}=\{(x_i,y_i,r_i,\pi_0(\cdot|x_i))\}_{i=1}^n, \label{eq:sample}
   \vspace*{-0.2cm}
\end{equation}
where $r_i:=r(x_i, y_i)$ is the observed reward. If the logging policy $\pi_0$ is unknown, an estimate $\hat{\pi}_0$ of the logging policy is used. 
Off-policy evaluation refers to the problem of using $\mathcal{S}$ for estimating the expected reward (or loss) $R$ of a new policy $\pi$
\begin{equation}
R(\pi)= \mE_{x\sim P(x)}\mE_{\bar{y}\sim\pi(\bar{y}|x)}\mE_{r \sim D(r|x,\bar{y})}[r]. \label{eq:reward}
\end{equation}
Off-policy learning uses $\mathcal{S}$ for finding an optimal policy $\pi^* \in \Pi$ that maximizes the expected reward (or minimizes the expected loss)
\begin{equation}
    \pi^* = \argmax_{\pi \in \Pi} \big[R(\pi)\big] \label{eq:pistar}.
    \vspace*{-0.2cm}
\end{equation}
The expected reward $R(\pi)$ cannot be computed directly, and it is typically replaced with a counterfactual estimate $\hat{R}(\pi)$ that can be computed from the logged bandit feedback $\mathcal{S}$ \cite{bottou2013counterfactual}. 
This enables Empirical Risk Minimization (ERM) for batch learning from bandit feedback (BLBF) \cite{zadrozny2003cost, beygelzimer2009offset, strehl2010learning, Swaminathan/Joachims/15c, Swaminathan/Joachims/15d}, where the algorithm finds the policy in $\Pi$ that maximizes the estimated expected reward
\begin{equation}
    \hat{\pi}^* = \argmax_{\pi \in \Pi}\left[\hat{R}(\pi)\right], \label{eq:erm}
    \vspace*{-0.2cm}
\end{equation}
possibly subject to capacity and variance regularization \cite{Swaminathan/Joachims/15c}. Since the counterfactual estimator $\hat{R}(\pi)$ is at the core of ERM learning, it is expected that an improved estimator will also lead to improved learning performance \cite{strehl2010learning}. In particular, unlike in supervised learning, the counterfactual estimator can have vastly different bias and variance for different policies in $\Pi$, such that trading off bias and variance of the estimator becomes important for effective learning \cite{Swaminathan/Joachims/15d}.

\subsection{Interpolated Counterfactual Estimator Family (ICE Family)}


In this section, we present a general family of estimators for off-policy evaluation in the contextual bandit setting. An analogous family of estimators for the setting of partial-information learning-to-rank \cite{joachims2017unbiased} is described in Appendix~\ref{sec:ltr}.
 
Let $\hat{\delta}(x,\newy)$ be the estimated reward for action $\newy$ given context $x$, and let $\hat{\pi}_0$ be the estimated (or known) logging policy. Then our family of off-policy estimators is defined as follows, where $\mathbf{w}=(\wva,\wvb,\wvg)$ is a triplet of weighting functions that parameterizes the family.
\begin{definition}[\classname]
Given a triplet $\mathbf{w}=(\wva,\wvb,\wvg)$ of weighting functions, 
\begin{equation*}
\begin{split}
    \Rw(\pi) =  & \frac{1}{n}\sum_{i=1}^n \sum_{\bar{y}\in\mathcal{Y}} \pi(\bar{y}|x_i) \: \wa \: \alpha_{i\bar{y}} \\
    & + \frac{1}{n}\sum_{i=1}^n \pi(y_i|x_i) \: \wb \: \beta_i 
    + \frac{1}{n}\sum_{i=1}^n \pi(y_i|x_i) \: \wg \: \gamma_i   
\end{split}
\end{equation*}
\begin{equation*}
\begin{split}
\hspace*{-0.5cm}\mbox{with } \:\:\:\:\:\:\:\:\:\:\:& \alpha_{i\bar{y}}:= \alpha(x_i,\bar{y}) := \hat{\delta}(x_i,\bar{y}), \\
             & \beta_i := \beta(x_i,y_i) :=  \frac{r(x_i,y_i)}{\hat{\pi}_0(y_i|x_i)}, \\
             & \gamma_i:= \gamma(x_i,y_i) :=  \frac{\hat{\delta}(x_i,y_i)}{\hat{\pi}_0(y_i|x_i)}.
\end{split}
\end{equation*}
\end{definition}
\begin{table}[t]
    \centering
    \caption{Overview of how the family of estimators $\Rw(\pi)$ subsumes existing off-policy estimators.}
    \label{tab:familyweights}
    \setlength{\tabcolsep}{4pt}
    \begin{tabular}{|l|c|c|c|} \hline 
    Estimator & $w_i^{\alpha}(\bar{y})$ & $w_i^{\beta}$ & $w_i^{\gamma}$ \\ \hline
    DM    &  1 & 0 & 0 \\
    IPS   &  0 & 1 & 0 \\
    cIPS  &  0 & $\min\!\left\{\!\frac{M \hat{\pi}_0(y_i|x_i)}{\pi(y_i|x_i)}\!,\!1\!\right\}$ & 0 \\ 
    DR    &  1 & 1 & -1 \\
    SB    &  $1-\tau$ & $\tau$ & 0 \\
    SWITCH&  $\indicator{\left\{\!\frac{\pi(\newy|x_i)}{\hat{\pi}_0(\newy|x_i)}\!>\!M\! \right\}}$ & $\indicator{\left\{\!\frac{\pi(y_i|x_i)}{\hat{\pi}_0(y_i|x_i)}\!\leq\! M \!\right\}}$ & 0 \\ \hline
    \end{tabular}
\end{table}
We will see in the following that different choices of the weighting functions $\mathbf{w}=(\wva,\wvb,\wvg)$ recover a wide variety of existing estimators (see Table~\ref{tab:familyweights}), and that our bias-variance analysis of $\Rw(\pi)$ suggests new estimators with desirable properties.
This class of counterfactual estimators builds upon three basic estimators for off-policy evaluation, and we will now introduce the motivation for the construction of the (context, action) level functions $\alpha(x_i,\newy), \beta(x_i,y_i)$ and $\gamma(x_i,y_i)$.

The first component $\alpha(x_i,\newy)$ follows a ``Model the World'' approach \cite{schafer1997analysis, rubin2004multiple, dudik2011doubly}, which exploits a model $\hat{\delta}(x,\newy)$ of the reward. The estimator that purely relies on this component is the DM estimator that is a special case of $\Rw(\pi)$ with static weights $\mathbf{w}=(1,0,0)$ for all $(x_i,y_i,\newy)$ (see Table~\ref{tab:familyweights}). The reward model $\hat{\delta}(x,\newy)$ is typically learned via regression, and it serves as an estimate of $\mathbb{E}_{r}[r|x,\newy]$ in \eqref{eq:reward} to be imputed in place of the unobserved rewards. The reward model typically has low variability, so the variance of DM is typically small. However, due to often unavoidable misspecification of the reward model, DM is often statistically inconsistent and can have a large bias. 

The second component $\beta(x_i,y_i)$ follows a ``Model the Bias'' approach for the assignment mechanism, which is particularly attractive in many applications where we control the assignment mechanism by design \cite{horvitz1952generalization,Swaminathan/Joachims/15c,nikosdesign2018}. The estimator fully based on this term is the widely used inverse propensity score (IPS) weighting estimator \cite{horvitz1952generalization,strehl2010learning,bottou2013counterfactual,Swaminathan/Joachims/15c}, which is the special case of $\Rw(\pi)$ with static weights $\mathbf{w}=(0,1,0)$ for all $(x_i,y_i,\newy)$ (see Table~\ref{tab:familyweights}). If $\hat{\pi}_0(y_i|x_i)$ is known and the logging policy has sufficient support w.r.t.\ the new policy $\pi$, then the IPS estimator is unbiased (see Section~\ref{sec:theoanalysis}). However, it can have large variance if the IPS weights $\pi(y_i|x_i)/\hat{\pi}_0(y_i|x_i)$ are large. 

The third component $\gamma(x_i,y_i)$ combines the "Model the World" approach and "Model the Bias" approach by de-biasing the estimated reward term. This component is not necessarily an attractive estimator itself, but can be used as part of a control variate for variance reduction as in the DR estimator \cite{robins1995semiparametric,bang2005doubly,kang2007demystifying, dudik2011doubly}. DR treats DM as a baseline while correcting the baseline when data is available, and it is a special case of $\Rw(\pi)$ with static weights $\mathbf{w}=(1,1,-1)$ for all $(x_i,y_i,\newy)$. It is unbiased when either the reward model or the propensity model is correct. However, by maintaining unbiasedness, we will discuss in Section~\ref{sec:variance_improvement} that it can still suffer from excessive variance. Furthermore, due to the non-zero weight put on the control variate term $\gamma(x_i,y_i)$, DR cannot be used for LTR from implicit feedback, since the analog of $y_i$ is only partially observable in that setting (see Appendix~\ref{sec:ltr}).

Another hybrid estimator was proposed by \cite{thomas2016data} for off-policy evaluation in the more general setting of reinforcement learning. When we translate the key idea behind their MAGIC estimator to the contextual bandit setting, we see that it is a special case of $\Rw(\pi)$ with tunable weights $\mathbf{w} =(1-\tau,  \tau,  0)$ for all $(x_i,y_i,\newy)$, where $\tau \in [0,1]$ is a parameter. We call this estimator Static Blending (SB), since $\tau$ is static and does not depend on the importance weights. 

The SWITCH Estimator \cite{wang2016optimal}, in contrast, is more adaptive. As the name implies, it switches between $\alpha(x_i,\newy)$ and $\beta(x_i,y_i)$ depending on a hard threshold $M$ on the IPS weights.
It is a special case of $\Rw(\pi)$ using the weights given in Table~\ref{tab:familyweights}.
A drawback of SWITCH is that its hard switching makes it discontinuous with respect to any parameters of the target policy $\pi$. This not only creates more erratic behavior when the threshold $M$ is changed, but it also means that SWITCH is not differentiable and thus cannot be used in gradient-based learning algorithms like POEM \cite{Swaminathan/Joachims/15c} or BanditNet \cite{Joachims/etal/18a} for BLBF. 

\subsection{Theoretical Analysis} \label{sec:theoanalysis}

We will now provide a general characterization of the bias and variance for the \classname, where both the propensity model and the reward model may be misspecified. Following \citet{dudik2011doubly}, let $\zeta(x,y)$ denote the multiplicative deviation of the propensity estimates from the true propensity model, and $\Delta(x,y)$ be the additive deviation of the reward estimates from the true expected reward.
\begin{equation}
\zeta:=\zeta(x,y) = 1 -\frac{\pi_0(y|x)}{\hat{\pi}_0(y|x)}
\end{equation}
\begin{equation}
\Delta:=\Delta(x,y) = \hat{\delta}(x,y)-\delta(x,y).
\end{equation}
Moreover, let $\sigma^2_r(x,y)$ denote the randomness of the reward. 
\begin{equation}
\sigma^2_r:=\sigma^2_r(x,y) = \mV_r(r(x,y)|x,y)
\end{equation}
Note that $\zeta(x,y)$ is zero when the logging policy $\pi_0$ is known. For brevity, we denote the true IPS weight as $c(x,y) := \frac{\pi(y|x)}{{\pi_0}(y|x)}$ and its estimated version as $\hat{c}(x,y):=\frac{\pi(y|x)}{{\hat{\pi}_0}(y|x)}$. For known propensities, $\hat{c}(x,y) = c(x,y)$. As usual, we posit that the following support/positivity condition holds.
\newline

\begin{condition}[Common Support] \label{as:commonsupport}
The logging policy $\pi_0$ has full support for the target policy $\pi$, which means  $\pi(y|x)>0 \to \pi_0(y|x)>0$ for all $x$ and $y$.
\end{condition}

\begin{restatable}[Bias of the \classname]{thm}{unibias}
\label{theo:biasunified}
For contexts $x_1, x_2, \cdots, x_n$ drawn i.i.d from some distribution $P(\mathcal{X})$ and for actions $y_i\sim \pi_0(\mathcal{Y}|x_i)$, under Condition~\ref{as:commonsupport} the bias of $\Rw(\pi)$ with weighting functions $\mathbf{w}=(\wva,\wvg,\wvg)$ is
\begin{equation}
\begin{split}
\mE_x\mE_{y\sim\pi}\Big[& \wva\Delta - \wvb\zeta\delta + \wvg(\Delta-\zeta(\delta+\Delta))\\
& +(\wva+\wvb+\wvg)\delta - \delta\Big]
\end{split}
\end{equation}
\end{restatable}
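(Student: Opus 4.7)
The plan is to compute $\E[\Rw(\pi)] - R(\pi)$ directly by decomposing $\Rw(\pi)$ into its three component sums, taking expectations term by term, and then collecting the resulting expressions into the form claimed in the theorem. Condition~\ref{as:commonsupport} will be used implicitly to ensure that none of the importance-weight manipulations blow up.

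First, by linearity of expectation and the i.i.d.\ assumption on $(x_i, y_i, r_i)$, it suffices to analyze a single summand in each of the three sums and multiply by $n/n = 1$. Writing $\E$ as the joint expectation over $x \sim P(\mathcal{X})$, $y \sim \pi_0(\cdot|x)$, and $r \sim D(\cdot|x,y)$, I would handle the three terms as follows. For the $\alpha$-term, note that $\sum_{\bar y} \pi(\bar y|x_i) \wa \alpha_{i\bar y}$ already averages over $\bar y \sim \pi$ and does not involve the sampled action $y_i$ or the reward, so its expectation is simply $\E_x \E_{\bar y \sim \pi}[\wva \hat\delta(x,\bar y)]$. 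Substituting $\hat\delta = \delta + \Delta$ splits this into a $\wva \delta$ contribution and a $\wva \Delta$ contribution.

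For the $\beta$-term, I would first take the inner expectation over $r$ to replace $r(x_i,y_i)$ by $\delta(x_i,y_i)$, then rewrite the expectation over $y_i \sim \pi_0$ as a sum against $\pi_0(y|x)$ and use the key identity $\pi_0(y|x)/\hat\pi_0(y|x) = 1 - \zeta(x,y)$, which follows directly from the definition of $\zeta$. This converts the expectation into $\E_x \sum_y \pi(y|x)\,\wvb\,\delta(x,y)\,(1-\zeta) = \E_x\E_{y\sim\pi}[\wvb \delta - \wvb \zeta \delta]$. The $\gamma$-term is treated identically, except that $\hat\delta(x,y) = \delta + \Delta$ appears in place of $r(x,y)$; the same change of measure yields $\E_x\E_{y\sim\pi}[\wvg(\delta+\Delta)(1-\zeta)]$, which after expansion is $\E_x\E_{y\sim\pi}[\wvg \delta + \wvg(\Delta - \zeta(\delta+\Delta))]$.

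Finally, I would subtract $R(\pi) = \E_x\E_{y \sim \pi}[\delta]$, collect all three contributions under the common $\E_x\E_{y\sim\pi}[\,\cdot\,]$ operator (relabeling the dummy $\bar y$ in the $\alpha$-term as $y$), and read off the four groups $\wva \Delta$, $-\wvb \zeta \delta$, $\wvg(\Delta-\zeta(\delta+\Delta))$, and $(\wva+\wvb+\wvg)\delta - \delta$ exactly as stated. The only step that requires any care is the change of measure in the $\beta$- and $\gamma$-terms, where one must be careful to distinguish the true $\pi_0$ (governing the data-generating process) from the estimated $\hat\pi_0$ (appearing in the denominators of $\beta_i$ and $\gamma_i$); the $(1-\zeta)$ factor is precisely the bookkeeping device that tracks the mismatch, and once it is introduced the remainder of the proof is routine algebra.
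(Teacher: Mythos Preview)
Your proposal is correct and follows essentially the same route as the paper's proof: both compute the expectation of each of the three components separately, use the identity $\pi_0(y|x)/\hat\pi_0(y|x)=1-\zeta$ to change measure from $\pi_0$ to $\pi$ in the $\beta$- and $\gamma$-terms, and then subtract $R(\pi)=\E_x\E_{y\sim\pi}[\delta]$ and regroup. The paper writes the change of measure via the intermediate factor $c(1-\zeta)$ before passing from $\E_{\pi_0}$ to $\E_{\pi}$, but this is exactly your computation in slightly different notation.
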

\textit{Proof:} See Appendix~\ref{Appendix: prooftheo1}.

\begin{restatable}[Variance of the \classname]{thm}{univariance}
\label{theo:varianceunified}
Under the same conditions as in Theorem~\ref{theo:biasunified}, the variance of $\Rw(\pi)$ with weighting functions $\mathbf{w}=(\wva,\wvb,\wvg)$ is
\begin{equation}
    \begin{split}
     & \frac{1}{n}\Big\{ \mathbb{V}_x\Big(\mathbb{E}_{\pi}[ \wva \Delta - \wvb \zeta\delta + \wvg(\Delta-\zeta(\delta+\Delta))\\
        &+(\wva+\wvb+\wvg)\delta]\Big)+\mathbb{E}_{x}\mathbb{E}_{\pi}\Big[(\wvb)^2c(1-\zeta)^2\sigma^2_r\Big]\\
        &+\mathbb{E}_{x}\Big[\mathbb{V}_{\pi_0}(\wvb c(1-\zeta)\delta+\wvg c(1-\zeta)(\delta+\Delta))\Big]\Big\}\\
    \end{split}
\end{equation}
 \end{restatable}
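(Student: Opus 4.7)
The estimator $\Rw(\pi) = \frac{1}{n}\sum_{i=1}^n Z_i$ is an empirical average of i.i.d.\ per-sample contributions
\[
Z_i = \sum_{\bar{y}} \pi(\bar{y}|x_i)\wa\alpha_{i\bar{y}} + \pi(y_i|x_i)\wb\beta_i + \pi(y_i|x_i)\wg\gamma_i,
\]
where the randomness comes from $x_i \sim P(\mathcal{X})$, $y_i \sim \pi_0(\cdot|x_i)$, and $r_i \sim D(\cdot|x_i,y_i)$. Independence gives $\mathbb{V}(\Rw(\pi)) = \frac{1}{n}\mathbb{V}(Z_1)$, so the entire argument reduces to computing the variance of a single $Z_1$ and pulling a factor of $1/n$.

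\textbf{Outer conditioning on $x$.} I will apply the law of total variance by conditioning on the context:
\[
\mathbb{V}(Z_1) = \mathbb{V}_x\!\bigl(\mathbb{E}[Z_1 \mid x]\bigr) + \mathbb{E}_x\!\bigl[\mathbb{V}(Z_1 \mid x)\bigr].
\]
For the first summand, the inner conditional expectation $\mathbb{E}[Z_1\mid x]$ is exactly the quantity that was computed term-by-term inside the proof of Theorem~\ref{theo:biasunified}: the $\alpha$-term is $x$-measurable and equals $\mathbb{E}_\pi[\wva(\delta+\Delta)]$, while taking $y\sim\pi_0$ and averaging $r$ against $\delta$ in the $\beta$ and $\gamma$ components introduces the factor $\pi_0/\hat{\pi}_0 = 1-\zeta$ and rewrites the $\pi_0$-expectation as a $\pi$-expectation. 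Collecting gives
\[
\mathbb{E}[Z_1\mid x] = \mathbb{E}_{\pi}\!\Big[\wva\Delta - \wvb\zeta\delta + \wvg\bigl(\Delta-\zeta(\delta+\Delta)\bigr) + (\wva+\wvb+\wvg)\delta\Big],
\]
which, after taking $\mathbb{V}_x$, is exactly the first piece of the claimed variance.

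\textbf{Inner conditional variance.} For $\mathbb{V}(Z_1\mid x)$, the $\alpha$-term drops out because it is a function of $x$ alone, leaving
\[
U := \pi(y_1|x_1)\wb\frac{r_1}{\hat{\pi}_0(y_1|x_1)} + \pi(y_1|x_1)\wg\frac{\hat{\delta}(x_1,y_1)}{\hat{\pi}_0(y_1|x_1)} = \hat{c}\bigl(\wb r + \wg\hat{\delta}\bigr),
\]
where $\hat{c} = \pi/\hat{\pi}_0 = c(1-\zeta)$. I then apply the law of total variance a second time, conditioning on $(x,y)$:
\[
\mathbb{V}(U\mid x) = \mathbb{E}_{y\sim\pi_0}\!\bigl[\mathbb{V}_r(U\mid x,y)\bigr] + \mathbb{V}_{y\sim\pi_0}\!\bigl(\mathbb{E}_r[U\mid x,y]\bigr).
\]
Only $r$ contributes to $\mathbb{V}_r$ and only through the $\wb$ term, producing $(\wb)^2\hat{c}^2\sigma_r^2$; using the standard change of measure $\pi_0(y|x)c(x,y) = \pi(y|x)$ converts one factor of $c$ from $\pi_0$- to $\pi$-expectation, yielding $\mathbb{E}_\pi[(\wvb)^2 c(1-\zeta)^2\sigma_r^2]$ as in the statement. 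For the inner conditional mean, $\mathbb{E}_r[U\mid x,y] = c(1-\zeta)\bigl(\wb\delta + \wg(\delta+\Delta)\bigr)$, whose variance under $y\sim\pi_0$ is the third summand of the claim.

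\textbf{Main obstacle.} The mathematics is a double application of the law of total variance, so the real work is bookkeeping: repeatedly rewriting ratios $\pi_0/\hat{\pi}_0$ as $1-\zeta$, $\pi/\hat{\pi}_0$ as $c(1-\zeta)$, and converting $\pi_0$-sums into $\pi$-expectations via $\pi_0 c = \pi$, all while keeping the $\delta$, $\Delta$, and $\delta+\Delta$ pieces straight so that the final expression matches the stated form exactly. I would handle the $\beta$ and $\gamma$ contributions in parallel throughout to make the linear structure visible, and rely on the conditional-expectation computation already verified in Theorem~\ref{theo:biasunified} to avoid recomputing the $\mathbb{E}[Z_1\mid x]$ piece from scratch.
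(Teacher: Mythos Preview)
Your proposal is correct and follows essentially the same approach as the paper: reduce to a single-sample variance by i.i.d., apply the law of total variance twice (first conditioning on $x$, then on $(x,y)$), reuse the conditional-expectation computation from Theorem~\ref{theo:biasunified} for the $\mathbb{V}_x$ piece, and use the change of measure $\pi_0 c = \pi$ to rewrite the reward-variance term as a $\pi$-expectation. The only cosmetic difference is that you strip off the $\alpha$-term immediately when forming $U$, whereas the paper keeps the full $\rp$ and notes at each inner step which summands are constants; the computations are otherwise line-for-line the same.
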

\textit{Proof:} See Appendix~\ref{Appendix: prooftheo2}.

Throughout the rest of the paper, these general results will guide the design of new estimators, and they will allow us to compare different estimators within the family with respect to their bias/variance trade-offs.

\section{\blendname Estimator (\blendabbrNS)}

In this section, we identify a new estimator within this family that has many desirable properties. In particular, the estimator arises naturally by filtering our family of estimators according to these properties. First, we would like the estimator to be unbiased if the reward model and the propensity model are correct, which can be achieved through constraining the weights $(\wa,\wb,\wg)$ to sum to $1$ for each context-action pair. Second, the estimator should be applicable to a wide range of partial information settings, including learning to rank, which requires $\wg=0$. Third, we would like to achieve low MSE, which argues for data-dependent weights that allow an instance dependent trade-off between bias and variance. And, fourth, we would like to use the estimator for gradient-based learning, which implies that the weighting functions need to be (sub-)differentiable.

These desiderata and constraints lead us to the following new \blendname estimator (\blendabbrNS).
\begin{eqnarray*}
\Rcab(\pi) = \Rw(\pi) \mbox{ with} \left\{\!\!\!
    \begin{array}{l}
    \wa = 1- \min\!\left\{M\frac{ \pi_0(\newy|x_i)}{\pi(\newy|x_i)},1\right\} \\
    \wb = \min\!\left\{M\frac{ \pi_0(y_i|x_i)}{\pi(y_i|x_i)},1\right\} \\
    \wg = 0 
    \end{array}\right. 
\label{eq:cab}
\end{eqnarray*}
It is easy to see that \blendabbr interpolates between DM and IPS in an example-dependent way, which allows trade-off between bias and variance by controlling $M$. In particular, \blendabbr inherits the idea that clipping is a data-dependent way to achieve smaller MSE. However, \blendabbr imputes a regression estimate $\hat{\delta}(x_i,\newy)$ proportional to the clipped-off portion of the IPS weight -- unlike clipped IPS (see Table~\ref{tab:familyweights}) that implicitly imputes zero. Note that the particular choice of weights $(\wa,\wb,\wg)$ makes $\Rcab(\pi)$ continuous and subdifferentiable with respect to the parameters of the policy $\pi$.

\subsection{Bias and Variance Analysis}
\label{sec:bias_variance_analysis}
We now analyze the bias and variance of \blendabbrNS\ as an instance of the counterfactual estimator family, and we will compare them to those of IPS, cIPS, DR, and DM.

\begin{restatable}[Bias of \blendabbrNS]{thm}{bias}
\label{theo:biasblend}
For contexts $x_1, x_2, \cdots, x_n$ drawn i.i.d from some distribution $P(\mathcal{X})$ and for actions $y_i\sim \pi_0(\mY|x_i)$, under Condition~\ref{as:commonsupport} the bias of $\Rcab(\pi)$ is
\begin{align}
\mathbb{E}_{x}\mathbb{E}_{\pi}&\Big[-\delta\zeta\indicator{\{\hat{c}\leq M\}}\\
& + \!\{\Delta(1-\frac{M}{c(1\!-\!\zeta)})-\frac{M}{c(1\!-\!\zeta)}\delta\zeta\} \indicator{\{\hat{c}\!>\!M\}}\Big]\nonumber
\end{align}
\end{restatable}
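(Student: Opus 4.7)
The plan is to derive the bias of $\Rcab$ by specializing the general bias formula from Theorem~\ref{theo:biasunified} to the CAB weights, then splitting the expectation by the indicator that identifies whether the propensity ratio has been clipped. Since CAB is defined as an element of the ICE family, the work reduces to bookkeeping on the weights $(\wa,\wb,\wg)$.

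First I would record two algebraic facts that drive the entire computation. The first is $\wg=0$ together with $\wa+\wb=1$: both the ``small weight'' branch ($\wa=0$, $\wb=1$) and the ``large weight'' branch ($\wa=1-M/\hat{c}$, $\wb=M/\hat{c}$) satisfy this, so the term $(\wva+\wvb+\wvg)\delta-\delta$ in Theorem~\ref{theo:biasunified} vanishes identically. The second is the relation $\hat{c}=c(1-\zeta)$, which follows immediately from the definition $\zeta=1-\pi_0/\hat{\pi}_0$ and lets me convert $M/\hat{c}$ into the form $M/(c(1-\zeta))$ that appears in the theorem statement.

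Next I would apply Theorem~\ref{theo:biasunified} and use the two facts above to reduce the bias of $\Rcab(\pi)$ to
\begin{equation*}
\mE_x\mE_{y\sim\pi}\bigl[\wva\Delta-\wvb\zeta\delta\bigr].
\end{equation*}
I would then split this expectation into two disjoint pieces using $\indicator{\{\hat{c}\le M\}}+\indicator{\{\hat{c}>M\}}=1$. On $\{\hat{c}\le M\}$, the CAB weights collapse to $\wa=0,\,\wb=1$, so the integrand is $-\delta\zeta\,\indicator{\{\hat{c}\le M\}}$. On $\{\hat{c}>M\}$, I substitute $\wa=1-M/\hat{c}$ and $\wb=M/\hat{c}$ and replace $\hat{c}$ by $c(1-\zeta)$ to obtain the bracket $\{\Delta(1-M/(c(1-\zeta)))-M\,\delta\zeta/(c(1-\zeta))\}\,\indicator{\{\hat{c}>M\}}$. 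Adding the two pieces reproduces exactly the claimed expression.

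There is essentially no analytic obstacle here; the one subtlety that deserves care is ensuring that the indicators $\indicator{\{\hat{c}\le M\}}$ and $\indicator{\{\hat{c}>M\}}$, which are themselves $y$-dependent random variables, are treated consistently inside the $\mE_{y\sim\pi}$ outer expectation used by Theorem~\ref{theo:biasunified}. Since the CAB weights $\wa,\wb$ are already functions of $(x_i,\newy)$ (for $\wa$) and $(x_i,y_i)$ (for $\wb$), this matches the form under which Theorem~\ref{theo:biasunified} is stated, so no additional measurability argument is required — the only potentially confusing step is keeping the arguments of $\pi_0,\hat{\pi}_0,\pi$ aligned with the correct action when rewriting $\min\{M\pi_0/\pi,1\}$ as $\min\{M/\hat{c},1\}$ using $\hat{c}=c(1-\zeta)$.
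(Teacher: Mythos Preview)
Your proposal is correct and follows essentially the same route as the paper's proof: specialize Theorem~\ref{theo:biasunified} to the CAB weights, use $\wg=0$ and $\wa+\wb=1$ to kill the $(\wva+\wvb+\wvg)\delta-\delta$ term, split by the indicator $\indicator{\{\hat c\le M\}}$ versus $\indicator{\{\hat c>M\}}$, and convert $M/\hat c$ into $M/(c(1-\zeta))$ at the end. The only slip is in your final sentence, where you write ``rewriting $\min\{M\pi_0/\pi,1\}$ as $\min\{M/\hat c,1\}$'': if the weights are defined with the estimated propensity $\hat{\pi}_0$ (as in the appendix and as required for the $\hat c$-indicators in the theorem statement), then $M\hat{\pi}_0/\pi = M/\hat c$ holds by definition and the relation $\hat c=c(1-\zeta)$ is only needed for the final cosmetic rewrite, not for identifying the clipping event.
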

\textit{Proof:} See Appendix~\ref{Appendix:prooftheo3}

Note that the first part of the bias results from the use of IPS when the IPS weight is small, while the second part results from the convex combination of IPS and DM when the IPS weight is large.
\begin{restatable}[Variance of \blendabbrNS]{thm}{variance}
\label{theo:varianceblend}
Under the same conditions as in Theorem~\ref{theo:biasblend}, the variance of $\Rcab(\pi)$ is
\begin{equation}
    \begin{split}
 & \frac{1}{n}\Big\{ \mathbb{V}_x\Big(\mathbb{E}_{\pi}[\delta -\delta\zeta\indicator{\{\hat{c}\leq M\}}\\
&+(\Delta(1-\frac{M}{c(1-\zeta)})-\frac{M}{c(1-\zeta)}\delta\zeta) \indicator{\{\hat{c}>M\}}]\Big)\\
        &+\mathbb{E}_{x}\mathbb{E}_{\pi}\Big[c(1-\zeta)^2\sigma^2_r\indicator{\{\hat{c}\leq M\}} + \frac{M^2}{c}\sigma^2_r\indicator{\{\hat{c}> M\}}\Big]\\
        &+\mathbb{E}_{x}\Big[\mathbb{V}_{\pi_0}(c(1-\zeta)\delta\indicator{\{\hat{c}\leq M\}}+M\delta\indicator{\{\hat{c}> M\}})\Big]\Big\}\\
    \end{split}
\end{equation}
\end{restatable}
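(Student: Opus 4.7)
The plan is to derive Theorem~\ref{theo:varianceblend} directly from the general variance formula in Theorem~\ref{theo:varianceunified} by substituting the specific CAB weights and simplifying. Since $\wvg \equiv 0$ for CAB, the third term inside the braces of Theorem~\ref{theo:varianceunified} simplifies immediately, and the second term reduces to $\mathbb{E}_x\mathbb{V}_{\pi_0}(\wvb c(1-\zeta)\delta)$.

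First, I would decompose the clipped weights via indicators on the event $\{\hat{c}\le M\}$, where $\hat{c}(x,y)=\pi(y|x)/\hat{\pi}_0(y|x)$. For the $\wb$ weight one has $\wb = \mathds{1}\{\hat{c}(x_i,y_i)\le M\} + \frac{M}{\hat{c}(x_i,y_i)}\mathds{1}\{\hat{c}(x_i,y_i)>M\}$, and analogously $\wxa = (1-\tfrac{M}{\hat{c}(x_i,\bar{y})})\mathds{1}\{\hat{c}(x_i,\bar{y})>M\}$. The key algebraic identity I would exploit repeatedly is $\hat{c}=c(1-\zeta)$, which follows from the definitions of $c$, $\hat{c}$, and $\zeta$. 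This immediately yields $\wvb c(1-\zeta)=c(1-\zeta)\mathds{1}\{\hat{c}\le M\}+M\mathds{1}\{\hat{c}>M\}$, which plugged into $\mathbb{V}_{\pi_0}(\,\cdot\,\delta)$ produces the third summand of the claimed variance. For the reward-noise term, the same substitution gives $(\wvb)^2 c(1-\zeta)^2 = c(1-\zeta)^2\mathds{1}\{\hat{c}\le M\} + \tfrac{M^2}{c}\mathds{1}\{\hat{c}>M\}$ (using $\hat{c}^2=c^2(1-\zeta)^2$), which recovers the second summand.

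For the outermost $\mathbb{V}_x(\mathbb{E}_\pi[\,\cdot\,])$ term, I would first observe the convenient cancellation $\wxa+\wxb=1$ pointwise for CAB, which collapses the $(\wva+\wvb+\wvg)\delta$ contribution to simply $\delta$. Then, splitting $\wva\Delta - \wvb\zeta\delta$ on $\{\hat{c}\le M\}$ (where $\wva=0$, $\wvb=1$) versus $\{\hat{c}>M\}$ (where $\wva=1-\tfrac{M}{c(1-\zeta)}$, $\wvb=\tfrac{M}{c(1-\zeta)}$), and again using $\hat{c}=c(1-\zeta)$ to rewrite every occurrence of $\tfrac{M}{\hat{c}}$ in the final display, yields exactly the bracketed expression inside $\mathbb{V}_x(\mathbb{E}_\pi[\,\cdot\,])$ in the statement of Theorem~\ref{theo:varianceblend}.

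The argument is essentially substitution plus careful bookkeeping, so there is no deep obstacle. The main place to be careful is keeping track of which argument the indicator is being evaluated at: $\wxa$ is indexed by the counterfactual action $\bar{y}$ while $\wb$ is indexed by the logged action $y_i$, so when taking $\mathbb{E}_\pi$ the indicator $\mathds{1}\{\hat{c}>M\}$ is a random event under the target policy whereas under $\mathbb{V}_{\pi_0}$ it is random under the logging policy. The substitution is valid in both because $\hat{c}(x,y)$ is a deterministic function of $(x,y)$. Collecting the three pieces and dividing by $n$ reproduces the formula stated in the theorem.
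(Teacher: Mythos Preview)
Your proposal is correct and mirrors the paper's own proof: both plug the CAB weights into Theorem~\ref{theo:varianceunified}, use $\wvg=0$ to drop the $\gamma$-contributions, split $\wvb$ and $\wva$ via the indicator $\mathds{1}\{\hat{c}\le M\}$, and repeatedly invoke $\hat{c}=c(1-\zeta)$ to simplify each of the three summands. There is a minor labeling slip in your first paragraph (it is the \emph{third} term of Theorem~\ref{theo:varianceunified}, not the second, that reduces to $\mathbb{E}_x[\mathbb{V}_{\pi_0}(\wvb c(1-\zeta)\delta)]$), but the substance of the computations is right.
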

\textit{Proof:} See Appendix~\ref{Appendix:prooftheo4}

The first term of the variance is due to the randomness in context $x$, the second term results from the randomness in the rewards compounded with {\em bounded} IPS weights. The third term is, in expectation, the variability in the expected reward compounded with {\em bounded} IPS weights.
\paragraph{Bias improvements over cIPS and DM.}
We can now compare the bias of CAB to that of cIPS, which we can derive
as a special case of Theorem~\ref{theo:biasblend} with $\hat{\delta}(x_i,\newy)=0$ for all $(x_i,\newy)$ pair.
Focusing on the case of logged propensities for conciseness and its real-world prevalence in online systems, this reduces to $Bias(\hat{R}_{cIPS}(\pi)) = \mathbb{E}_x\mathbb{E}_{\pi}[-\delta(1-\frac{M}{c})\indicator{\{c> M\}}]$ for cIPS and $Bias(\Rcab(\pi)) = \mathbb{E}_x\mathbb{E}_{\pi}[\Delta(1-\frac{M}{c})\indicator{\{c> M\}}]$ for CAB.
It can be seen that if we have a moderately good predictor of the expected reward $\delta(x,\newy)$, \blendabbr will have an advantage as long as the predictor is better than imputing the constant 0 everywhere. In practice, it is sensible to assume that the reward estimation error $\Delta(x,\newy)$ is substantially smaller than $\delta(x,\newy)$, such that \blendabbr enjoys a substantial amount of bias reduction. 

In comparison to DM, \blendabbr can also enjoy smaller bias when the propensity is known,  since $Bias(\hat{R}_{DM}(\pi)) = \mathbb{E}_x\mathbb{E}_{\pi}[\Delta]$ while $Bias(\Rcab(\pi)) = \mathbb{E}_x\mathbb{E}_{\pi}[\Delta (1-\frac{M}{c})\indicator{\{c> M\}}]$ which reflects that CAB incurs bias only on the clipped portion of the importance sample weights.
\paragraph{Variance improvements over IPS and DR.} \label{sec:variance_improvement}
Comparing the variance of CAB to that of IPS and DR, we again focus on the case of logged propensities. From Theorem~\ref{theo:varianceunified} we can deduce that the variance of IPS is
 \begin{equation}
    \begin{split}
  \frac{1}{n}\Big\{ \mathbb{V}_x\Big( \mathbb{E}_{\pi}[\delta]\Big)
      + \mathbb{E}_x\mathbb{E}_{\pi}\Big[c\sigma^2_r\Big] +\mathbb{E}_{x}\Big[\mathbb{V}_{\pi_0}(c\delta)\Big]\Big\}
    \end{split}
    \end{equation}
while the variance for \blendabbr is
\begin{equation}
    \begin{split}
     \frac{1}{n} & \Big\{  \mathbb{V}_x\Big( \mathbb{E}_{\pi}[\delta + \Delta(1-\frac{M}{c})\indicator{\{c>M\}}]\Big) \\
    & + \mathbb{E}_x\mathbb{E}_{\pi}\Big[c\sigma^2_r\indicator{\{c\leq M\}} + \frac{M^2}{c}\sigma^2_r\indicator{\{c>M\}}\Big]  \\
    & + \mathbb{E}_{x}\Big[\mathbb{V}_{\pi_0}[(c\delta)\indicator{\{c\leq M\}}+M\delta\indicator{\{c>M\}}]\Big]\Big\}.
    \end{split}
\end{equation}
The first term
is similar for both estimators since $\delta$ can be expected to dominate $\Delta$. The second and the third terms, which are the variance of the reward $r(x,\newy)$ and the expected reward $\delta(x_i, \newy)$ compounded with the IPS weights $c$, can be very large for IPS when the logging policy $\pi_0$ and the target policy $\pi$ are very different.
In contrast, for CAB these two terms are bounded by $M\mathbb{E}_{\pi}[\sigma^2_r]+M^2\mathbb{E}_{x}[\mathbb{V}_{\pi_0}(\delta)]$, and thus will be smaller than those for IPS.

Comparing CAB to DR, note that DR intends to reduce the variance of IPS by putting weight 1 on the observed loss term $\beta(x_i,y_i)$ and -1 on the estimated loss term $\gamma(x_i,y_i)$. However, this "residual term" is still compounded with the IPS weights $c$ and can blow up the variance either when  we have a poor estimate $\Delta$ or when the target policy is very different from the logging policy. This is apparent in the second and third terms of the variance of DR as derived from Theorem~\ref{theo:varianceunified}.
 \begin{equation}
    \begin{split}
    & \frac{1}{n}\Big\{ \mathbb{V}_x\Big( \mathbb{E}_{\pi}[\delta]\Big)
    +\mathbb{E}_x\mathbb{E}_{\pi}\Big[c\sigma^2_r\Big] +\mathbb{E}_x\Big[\mathbb{V}_{\pi_0}(c\Delta)\Big]\Big\}
    \end{split}
    \end{equation}
This is again different from \blendabbrNS, where the IPS weights are all bounded by $M$.



\subsection{CAB-DR}

Finally, we present a variant of CAB that incorporates the DR estimator, called CAB-DR. While this leads to an estimator that cannot be used for ranking, we investigate whether the control variate of DR leads to even better estimates than CAB. The key idea is to substitute the IPS part of CAB with the DR estimator, leading to
\begin{eqnarray*}
\Rcabdr(\pi) \!=\! \Rw(\pi) \mbox{ with} \left\{\!\!\!
    \begin{array}{l}
    \wa = 1 \\
    \wb = \min\!\left\{M\frac{ \pi_0(y_i|x_i)}{\pi(y_i|x_i)},1\right\} \\
    \wg = -\! \min\!\left\{\!M\frac{ \pi_0(\newy|x_i)}{\pi(\newy|x_i)},\!1\!\right\}
    \end{array}\right.
\label{eq:cab-dr}
\end{eqnarray*}
as a clipped version of DR. Using Theorem~\ref{theo:biasunified}, it is easy to derive the bias of CAB-DR to be
\begin{equation*}
\begin{split}
\mE_x\mE_{\pi}\Big[\zeta\Delta\indicator{\{\hat{c}\leq M\}}+
\Delta(1-\frac{M}{c}) \indicator{\{\hat{c}> M\}}\Big].
\end{split}
\end{equation*}
If the propensities are logged, then the bias terms for CAB and CAB-DR are identical. However, if the propensities are only approximates, CAB will suffer from more bias from the term $\mE_x\mE_{\pi}[\zeta\delta\indicator{\{\hat{c}\leq M\}}]$ compared to $\mE_x\mE_{\pi}[\zeta\Delta\indicator{\{\hat{c}\leq M\}}]$ for CAB-DR.

The variance of CAB-DR is given in Appendix~\ref{Appendix:prooftheo5}. Given logged propensities, the variance of CAB-DR differs from that of CAB in only a single term. For CAB-DR we have $$\mE_x\Big[\mV_{\pi_0}\Big(c(-\Delta)\indicator{\{c\leq M\}}+M(-\Delta)\indicator{\{c> M\}}\Big)\Big],$$
while for CAB we have
$$\mE_x\Big[\mV_{\pi_0}\Big(c(\delta)\indicator{\{c\leq M\}}+M(\delta)\indicator{\{c> M\}}\Big)\Big].$$
In general cases, by adopting the idea of choosing opposite weights for $\beta(x_i,y_i)$ and $\gamma(x_i,y_i)$ from DR, the third variance term for CAB-DR becomes $\mathbb{E}_x[\mathbb{V}_{\pi}(-\wvb c(1-\zeta)\Delta)]$, which is no longer on the order of $\mathbb{E}_x[\mathbb{V}_{\pi}(\wvb c(1-\zeta)\delta)]$.

\begin{figure*}[!htb]
\centering
\subfigure{
\begin{minipage}{0.24\linewidth}
 \centering
  \includegraphics[width=0.95\linewidth]{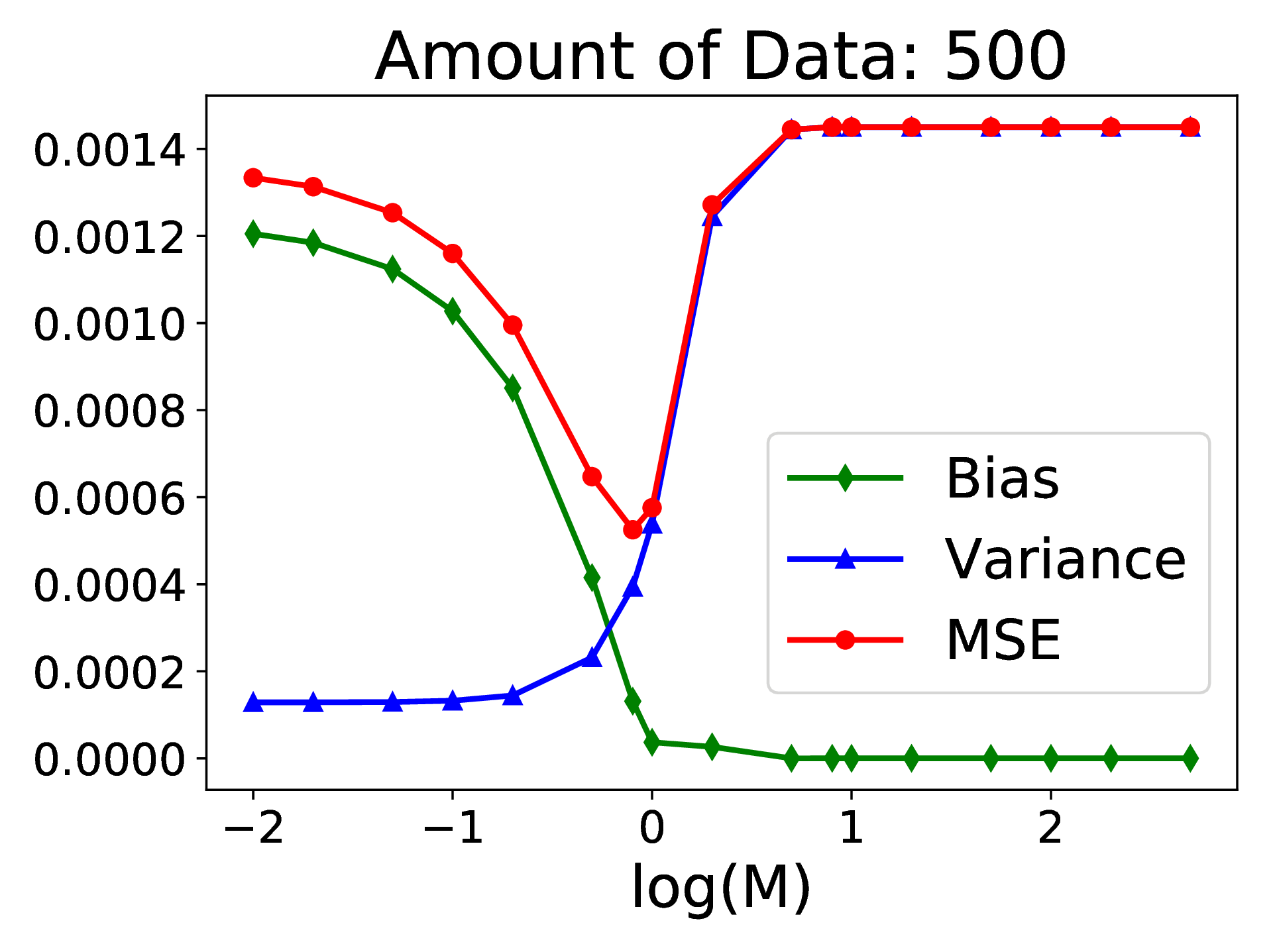}
 \label{fig:num500}
\end{minipage}}
\subfigure{
\begin{minipage}{0.24\linewidth}
\centering
  \includegraphics[width=0.95\linewidth]{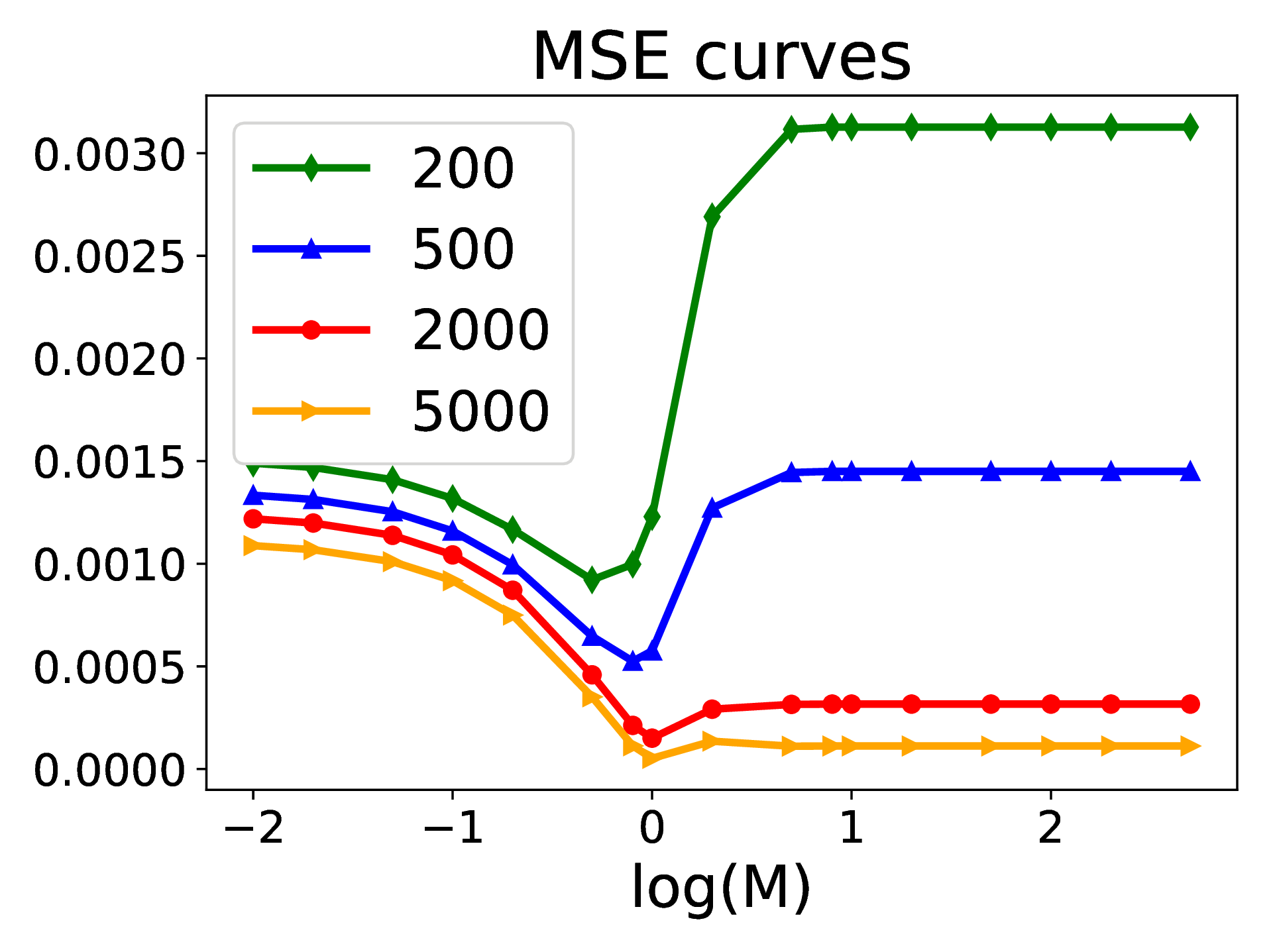}
\label{fig:num2000}
\end{minipage}}
\subfigure{
\begin{minipage}{0.24\linewidth}
\centering
   \includegraphics[width=0.95\linewidth]{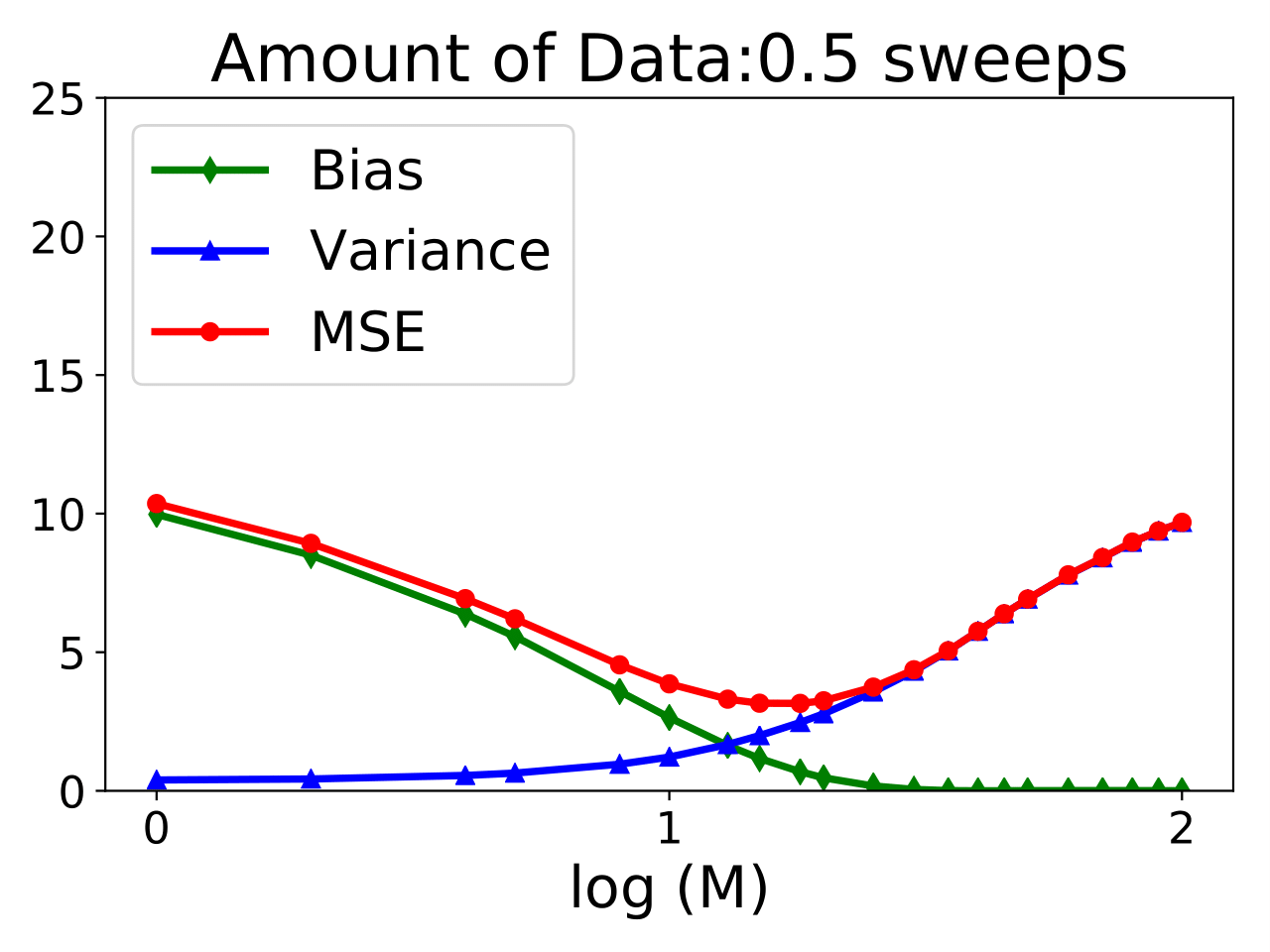}
\label{fig:0_5_sweeps}
\end{minipage}}
\subfigure{
\begin{minipage}{0.24\linewidth}
\centering
   \includegraphics[width=0.95\linewidth]{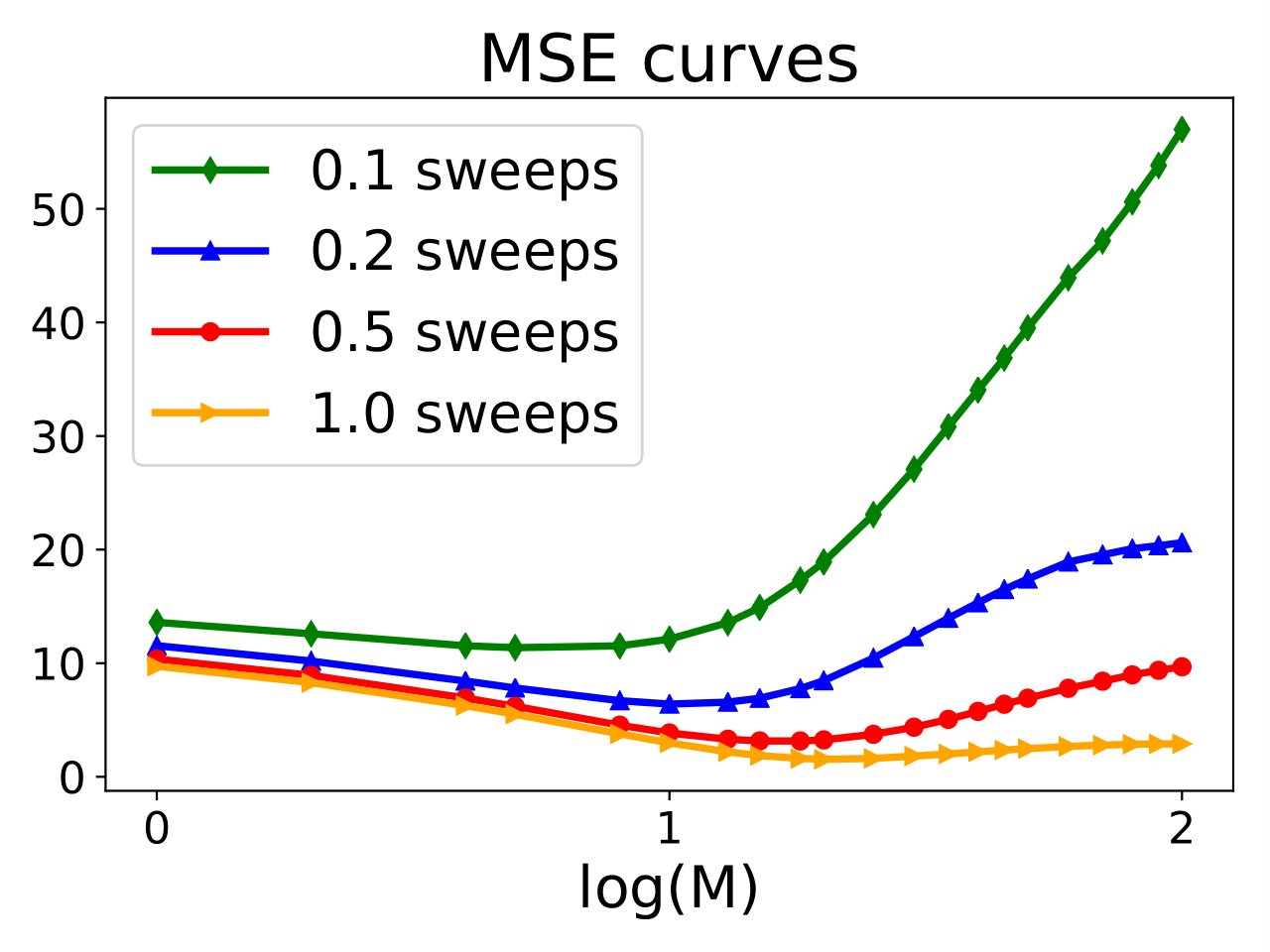} 
\label{fig:mse_dff_data}
\end{minipage}}
\caption{The Bias, Variance and MSE graph for \blendabbr on the \textsc{satimage} and \textsc{Yahoo! LTR} dataset. From left to right: (a) The bias, variance and MSE  curves for the \textsc{satimage} dataset. (b) MSE curves for \textsc{satimage} when we vary the amount of log data. (c) The bias, variance and MSE curves for the \textsc{Yahoo! LTR} dataset. (d) MSE curves for \textsc{Yahoo! LTR} when we vary the amount of log data. }\label{fig:biasvar}
\end{figure*}

\section{Experiments}
\label{sec:experiments}
We empirically examine the evaluation accuracy and learning performance of \blendabbr in two different partial-information settings.
In the BLBF setting, we conduct the experiments on bandit feedback data for multi-class classification. This setting is extensively used in the off-policy evaluation literature \cite{dudik2011doubly,wang2016optimal}. In the LTR setting, the experiments are based on user feedback with position bias for ranking \cite{joachims2017unbiased}. In both cases, we use real datasets from which we sample synthetic bandit or click data. This increases the external validity of the experiments, while at the same time providing ground truth for a bias/variance analysis. Furthermore, it allows us to vary the properties of both data and logging policy $\pi_0$ to explore the robustness of the estimators.

\subsection{Experiment Setup}
For the BLBF setting, our experiment setup follows \citet{dudik2011doubly} and \citet{wang2016optimal} using the standard supervised $\to$ bandit conversion \cite{agarwal2014taming} for several multiclass classification datasets from the UCI repository \cite{uci}. In the LTR setting, we follow the experiment setup of \citet{joachims2017unbiased} and conduct experiments on the \textsc{Yahoo! LTR} Challenge corpus (set 1), which comes with a train/validation/test split. More details are given in Appendix~\ref{sec:ltr}.

In both settings, we use a small amount of the full-information training data to train a logger $\pi_0$ and a regression model $\hat{\delta}$. The policy $\pi$ to be evaluated is trained on the whole training set. The partial feedback data is generated from the full-information test set. We evaluate the policy $\pi$ with different estimators on the partial feedback data of different sizes and treat the performance of the full-information test set as the ground truth. The performance is measured by the expected test error and the average rank of positive results for BLBF and LTR respectively. We repeat the experiments 500 times for BLBF and 100 times for LTR to calculate bias, variance and MSE.

For the BLBF learning experiments, we use POEM \cite{Swaminathan/Joachims/15c} to learn stochastic linear policies. For LTR, Appendix~\ref{sec:genpropsvm} derives a generalized version of propensity SVM-Rank \cite{joachims2017unbiased} that enables the use of CAB and other estimators with $\wg=0$ from our family. As input to the learning algorithms, different amounts of partial feedback data are simulated from the full-information training data. To avoid biases from the regression model, we adopt 90 percentile cIPS to conduct hyperparameter selection for $M$(or $\tau$ for SB) and regularization parameter on the partial feedback data simulated from the validation set. 
The experiments are run for 10 and 5 times on BLBF and LTR respectively and the average is reported. Details are shown in Appendix \ref{appendix: setup}.

\subsection{Experiment Results}

\paragraph{Can \blendabbr achieve improved estimation accuracy by trading bias for variance through $M$?}
We first verify that CAB can indeed achieve improved MSE by adjusting the bias-variance trade-off. Figure \ref{fig:biasvar} shows how the choice of $M$ affects bias and variance of \blendabbr on the \textsc{satimage} and \textsc{Yahoo! LTR} datasets using different amounts of data (qualitatively similar results are obtained for the other datasets). For each dataset, the bias decreases as we increase $M$ as expected, since \blendabbr moves towards the unbiased IPS estimator. However, the variance increases as more data points rely on IPS weighting. For all data set sizes, the best MSE always falls in the middle of the range of $M$ which confirms that \blendabbr can effectively trade-off between bias and variance through controlling $M$ for a range of data-set sizes. Moreover, the MSE curve suggests that the performance of CAB is pretty robust to the choice of $M$. 

\paragraph{How does \blendabbr compare to other off-policy estimators?}

\begin{figure*}[!htb]
\centering
\includegraphics[scale=0.055]{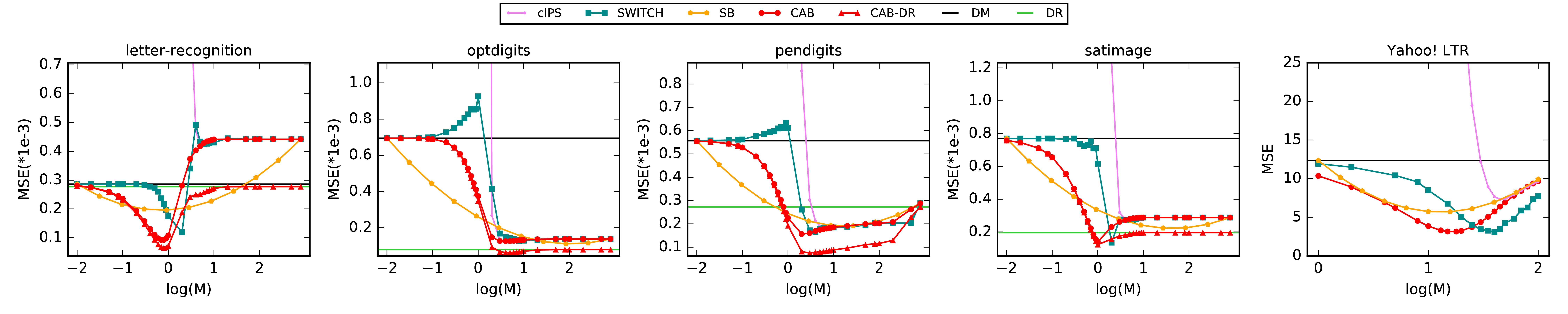}
\caption{MSE comparison of various off-policy estimators for different datasets}
\label{fig:compare}
\vspace*{-0.1cm}
\end{figure*}

Figure~\ref{fig:compare} compares different off-policy estimators on the 4 UCI datasets (selecting those with dataset size larger than 5000) and the \textsc{Yahoo! LTR} dataset. For each UCI dataset, we keep the test data size as 2,000. For the LTR dataset, we present the results with 0.5 sweeps of the test set. Notice that \blendabbr, \blendabbrNS-DR, cIPS and SWITCH all have a clipping parameter $M \in [0, \infty)$, while for SB the blending is achieved through the static weight parameter $\tau \in [0,1]$. To be able to plot SB together with the other estimators, we rescale $\tau$ in the plot for comparison. DM and DR do not have any hyperparameter, so we use two horizontal lines to represent them.

For both SB and \blendabbr (\blendabbrNS-DR), and across all datasets, we observe a $U$-shape curve for MSE with the optimum value in the middle. However, on most datasets \blendabbr (and \blendabbrNS-DR) substantially outperforms SB. Furthermore, \blendabbr outperforms cIPS in the full range of M on all datasets, indicating that imputing a reasonably good regression estimate is indeed consistently better than naively imputing zero. For the SWITCH estimator, the MSE curve is somewhat more erratic than that of \blendabbr especially on the UCI datasets, which we conjecture is due to the hard switch it makes and the discontinuities this implies.  While SWITCH can be used in LTR algorithms like Propensity SVM-Rank \cite{joachims2017unbiased} (details are shown in Appendix ~\ref{sec:ltr}), we show in the next section that this behavior may make model selection during learning more stable for \blendabbr than for SWITCH. Furthermore, \blendabbr performs at least comparable to SWITCH across all datasets, and on some it can be substantially more accurate than SWITCH.

For all datasets, DR outperforms IPS and DM as expected. However, \blendabbr (also \blendabbrNS-DR) still outperforms DR on most datasets, which validates the idea that estimators outside the class of unbiased estimators can have advantages on this problem.
Furthermore, when used in learning, one already faces a bias/variance trade-off due to the capacity of the policy space, such that it seems unjustified to insist on the unbiasedness of the empirical risk to begin with.

\paragraph{How robust is CAB on real-world data?}

We evaluated CAB on data from a contextual bandit problem at Amazon Music. Both the logging policy and the target policy are a Thompson sampling contextual bandit algorithm for which we estimated the respective policy $\pi^t_0(y|x)$ and $\pi^t_{target}(y|x)$ at each time step $t$ through Monte Carlo sampling. When analyzing the logging distribution, we found that the logging policy does not provide full support as the Thompson sampler converges, and on average only 33\% of the available actions have non-zero support. We used the model learned by the Thompson sampler of the logging policy as the regression imputation model, $\hat{\delta}(x,y)$.

\begin{figure}[t]
    \centering
     \includegraphics*[width=0.95\linewidth]{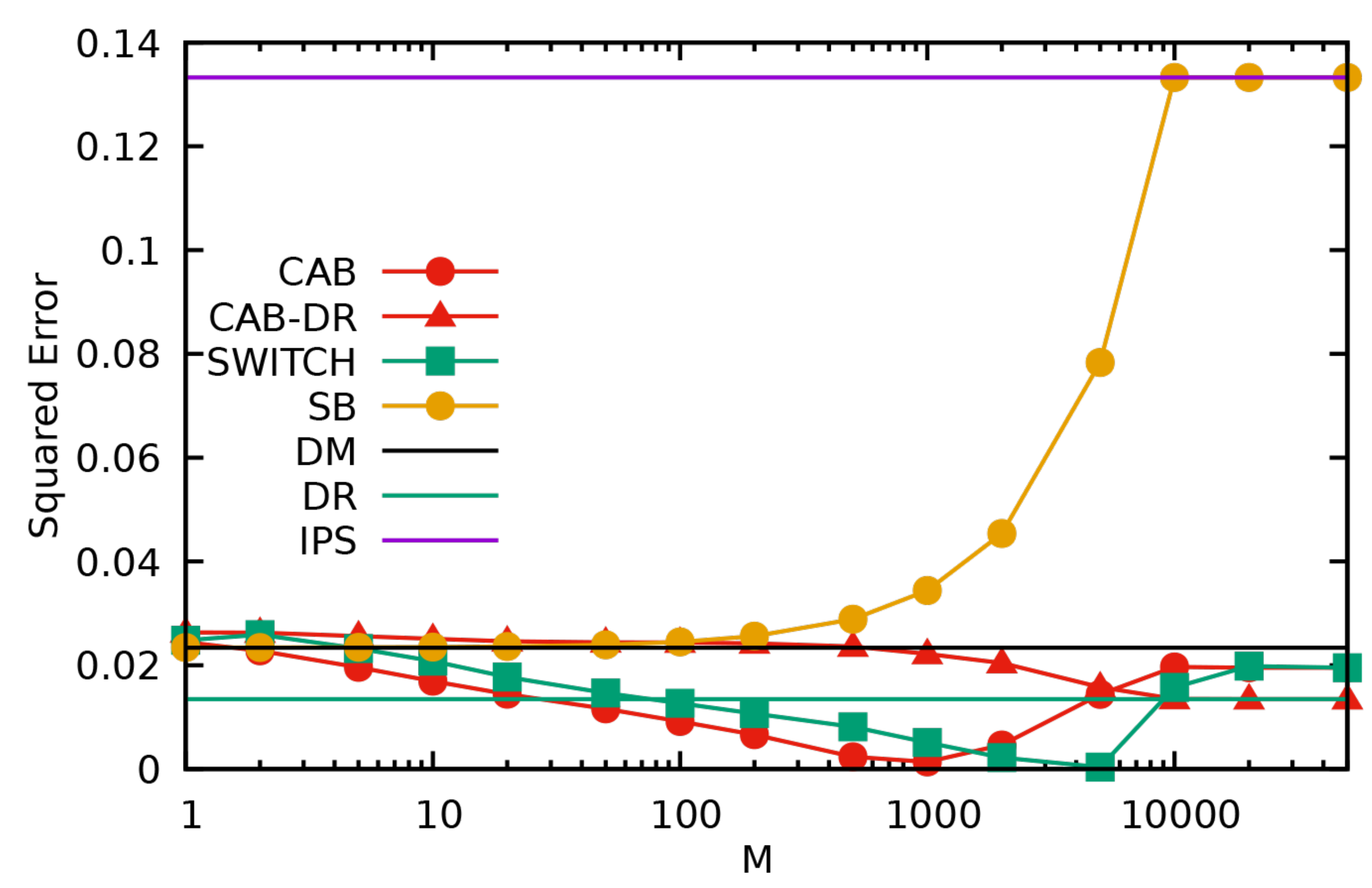}
    \vspace*{-0.4cm}
    \caption{Error of the estimates on the Amazon Music contextual bandit problem.}
    \label{fig:icml}
\end{figure}

\begin{figure*}[!htb]
\centering
\subfigure{
\begin{minipage}{0.24\linewidth}
 \centering
  \includegraphics[width=1.1\linewidth]{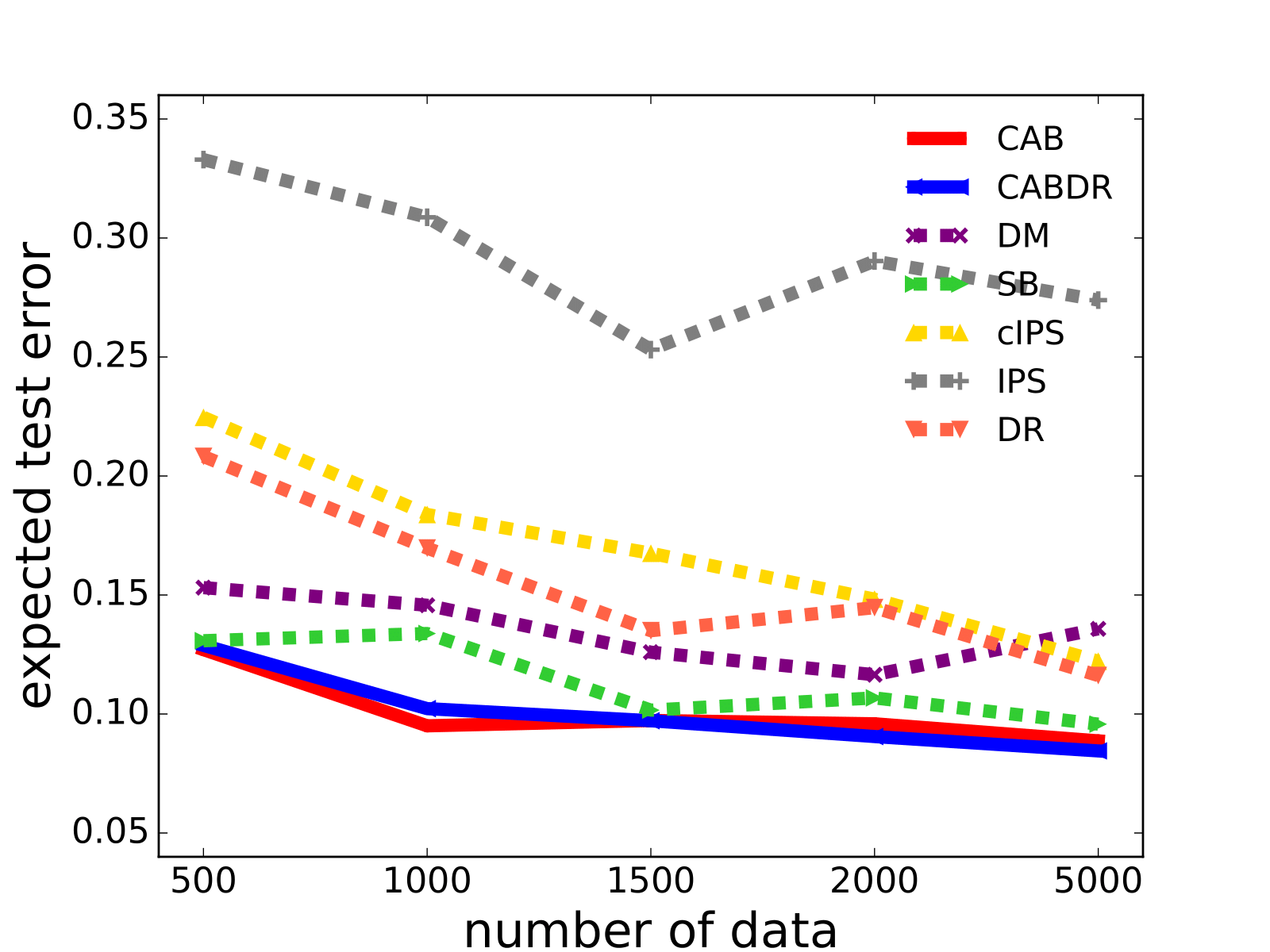}
 \label{fig:uci_learning}
\end{minipage}}
\subfigure{
\begin{minipage}{0.24\linewidth}
\centering
 \includegraphics[width=1.1\linewidth]{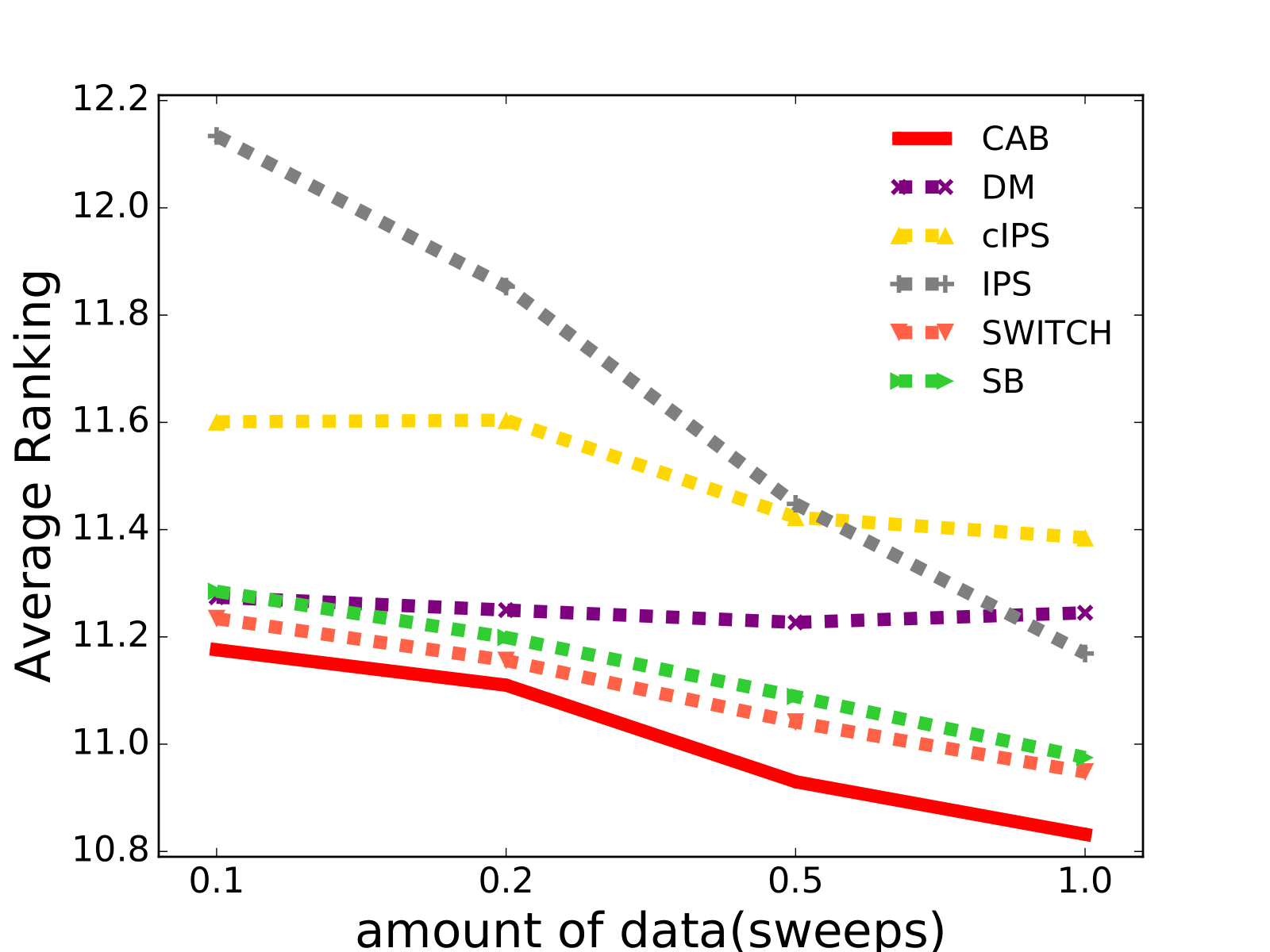}
\label{fig:ltr_learning}
\end{minipage}}
\subfigure{
\begin{minipage}{0.24\linewidth}
\centering
   \includegraphics[width=1.1\linewidth]{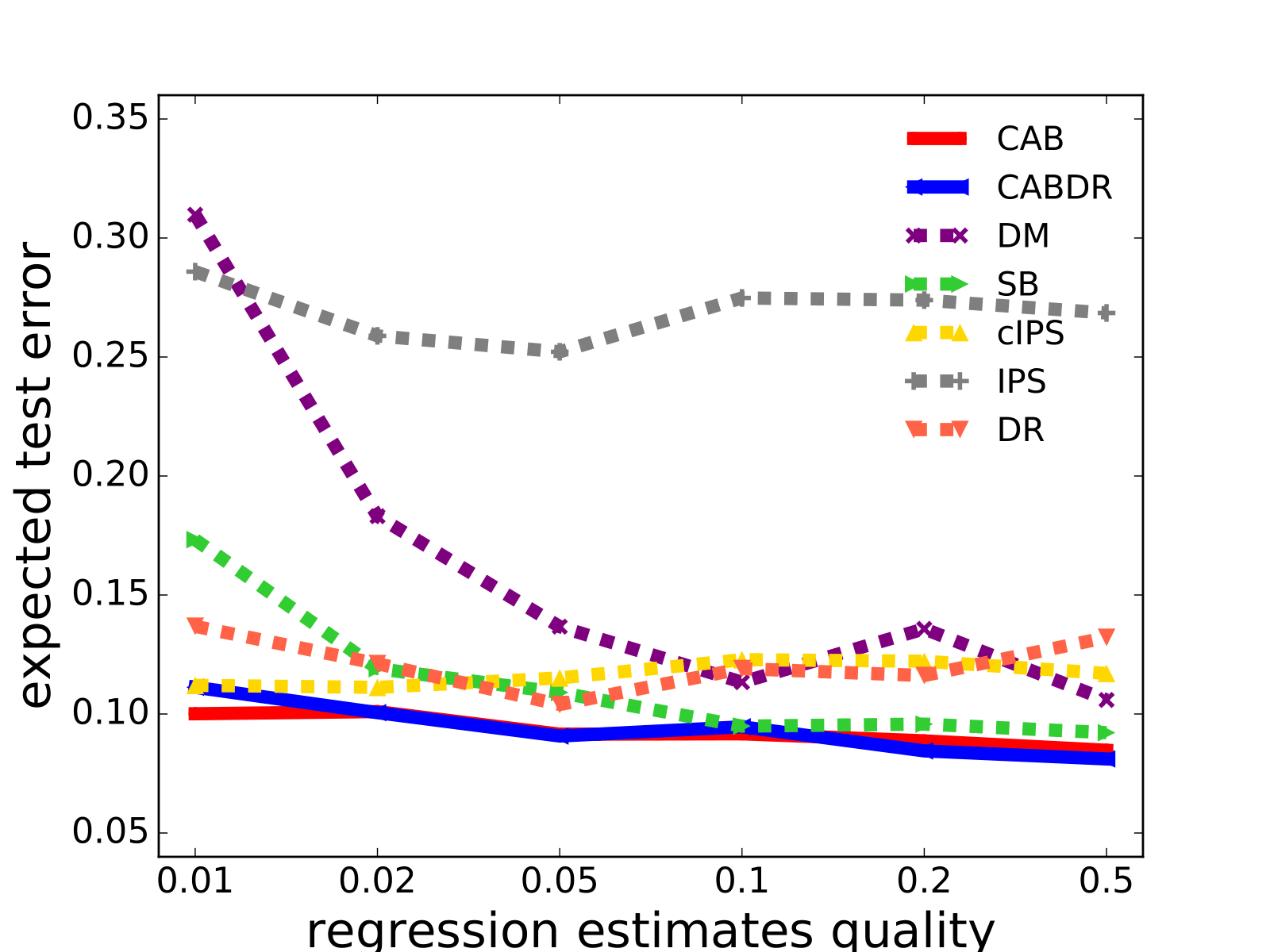}
\label{fig:reg}
\end{minipage}}
\subfigure{
\begin{minipage}{0.24\linewidth}
\centering
    \includegraphics[width=1.1\linewidth]{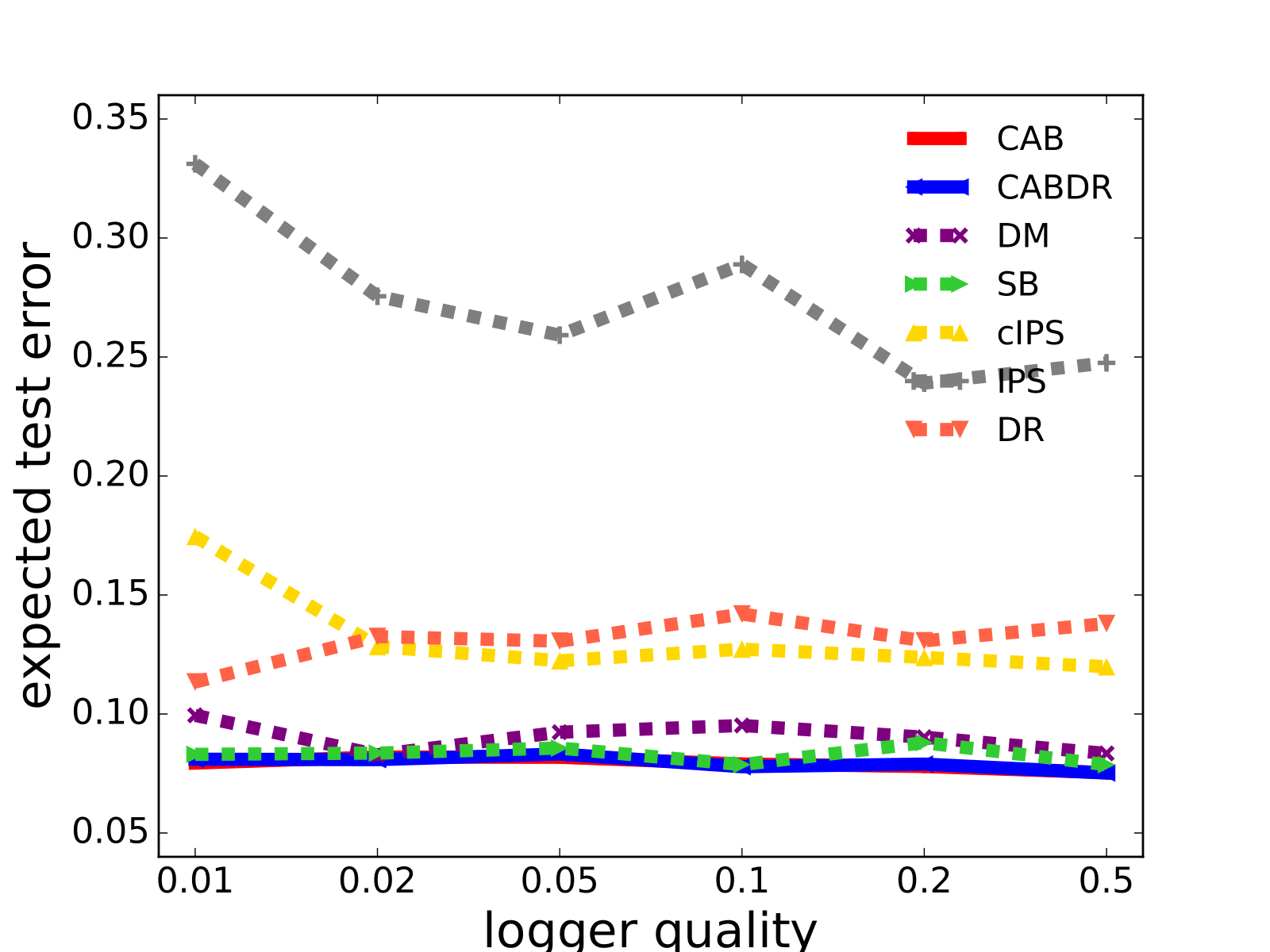}
\label{fig:logger}
\end{minipage}}
\vspace{-0.5cm}
\caption{Test set learning performance under various scenarios. (1)-(2) Performance vs. amount of data for \textsc{PENDIGITS} and \textsc{Yahoo! LTR}. (3) Performance vs. regression model quality for \textsc{PENDIGITS}. (4) Performance vs. logger quality for \textsc{PENDIGITS}.  }\label{fig:learning}
\end{figure*}

Figure~\ref{fig:icml} shows the error of the estimates depending on the clipping constant $M$, where we use the average reward of $\pi^t_{target}(y|x)$ measured during online A/B testing as the gold standard. For the majority of the values of the clipping constant  $M$, CAB and SWITCH show a higher level of accuracy than DM, IPS and SB. Both IPS and SB perform poorly. This is due to the fact that IPS has no mechanism for detecting and correcting for the missing support of the logging policy, while SB has a static blending constant. For a large range of $M$, CAB and SWITCH also outperform DR and CAB-DR. Overall, we find that both CAB and SWITCH outperform the other methods and can provide robust solutions to the contextual bandit problem at Amazon Music. 

\paragraph{How effective is learning with CAB as empirical risk across datasets?}

Table~\ref{datasets-table} shows the learning performance when the various estimators are used as empirical risk in POEM and Propensity SVM-Rank. We use the same datasets as in the evaluation experiments. For the 4 UCI datasets (with $5000$ training data for each) we report the average test set error, while for \textsc{Yahoo! LTR} (with $1$ sweep of data) we report the average rank of the relevant results for the test queries. For both lower is better. 
Across all datasets, learning with CAB performs very well, providing one of the best prediction performances on all datasets which validates that learning with an improved estimator will also lead to improved learning performance. Comparing CAB with CAB-DR, we don't see a real benefit in using the DR model in CAB-DR instead of using IPS in CAB. We conjecture that it is due to the over-reliance on the regression model, as DR relies more on it than IPS.

\begin{table}[ht]
\centering
\caption{Test set learning performance on various datasets.}
\label{datasets-table}
\begin{small}
\begin{sc}
\scalebox{0.7}
{
\begin{tabular}{lccccccr}
\toprule
DATA& Letter  &Optdigits &Satimage & Pendigits & Yahoo! LTR\\
\midrule
DM  & 0.6372 & 0.0649 & 0.3083 & 0.1133  & 11.25\\
DR  & 0.6852 & 0.0471 & 0.2762 & 0.1191 & -\\
IPS  & 0.8969  & 0.0695  & 0.3266 & 0.2748 & 11.17\\
cIPS & 0.8504 & 0.0447 & \textbf{0.2415} & 0.1228 & 11.39\\
SB & 0.6091  & 0.0460 & 0.2481 & 0.0949 & 10.98\\
SWITCH & - & - & - & - & 10.95\\
CAB & \textbf{0.5740} & \textbf{0.0445}  & 0.2442 & \textbf{0.0917} & \textbf{10.83}\\
CAB-DR &0.5877 & 0.0461 & 0.2762 & 0.0946 & -\\
\bottomrule
\end{tabular}}
\end{sc}
\end{small}
\vskip -0.2in
\end{table}

\paragraph{How does learning performance change when we increase the amount of training data?}
Plots (1) and (2) in Figure~\ref{fig:learning} show test set performance when we increase the amount of training data for \textsc{pendigits} and \textsc{Yahoo! LTR}. 
We find that performance improves for all estimators as the amount of data increases. However, for all training data sizes, we observe that CAB and CAB-DR perform very well compared to the other estimators, with a substantial improvement over IPS, cIPS and DR especially in the small sample setting. Furthermore, \blendabbr performs better than SWITCH in the ranking setting, which we attribute to the more erratic behavior of SWITCH during model selection.

\paragraph{How does learning performance change when we vary the quality of the estimated regression model?}
In order to vary the quality of the regression model used by DM, DR, CAB, CAB-DR, SWITCH and SB, we vary the fraction of full-information data that is used to learn the regression model. 
The results are shown in plot (3) of Figure~\ref{fig:learning}.
IPS and cIPS result in two flat lines (up to variance), as they do not rely on the regression model. The other methods improve with the quality of the regression model, and both CAB and CAB-DR do well over the whole range.

\paragraph{How does learning performance change when we vary the quality of the logging policy?}
Plot (4) of Figure~\ref{fig:learning} shows the results when changing the logger quality. To do so, we used different fractions of the full-information data to train the logging policy, from 0.01 to 0.5.
Since the DM estimator is independent of the logging policy, it results in a flat line (up to variance).
IPS and cIPS are heavily affected by the logger quality, due to their dependency on the IPS weights, while 
CAB, CAB-DR and SB are only moderately affected. Overall, CAB and CAB-DR perform well across the whole range.

\section{Conclusion}
This paper proposed a parametric family of estimators for off-policy evaluation, which unifies and characterizes a number of popular off-policy estimators. The theoretical analysis also motivates the \blendabbr estimator, which not only provides a controllable bias/variance trade-off for off-policy evaluation, but is also continuous with respect to the target policy to enable gradient-based learning. We argue theoretically that \blendabbr can be less biased than cIPS and DM and often enjoys smaller variance than IPS and DR. Experiment results on two different partial-information settings -- contextual bandit and partial-information LTR -- confirm that \blendabbr can consistently achieve improved evaluation performance over other counterfactual estimators, and that it also leads to excellent learning performance.

\section*{Acknowledgements} 
This research was supported in part by NSF Awards IIS-1615706, IIS-1513692, and 1740822 as well as a gift from Amazon. All content represents the opinion of the authors, which is not necessarily shared or endorsed by their respective employers
and/or sponsors.



\bibliography{example_paper}
\bibliographystyle{icml2019}
\newpage
\onecolumn
\appendix

\section{Appendix: Counterfactual Learning to Rank}
\label{sec:ltr}
We also consider another important partial information setting: ranking evaluation and learning to rank based on implicit feedback (e.g. clicks, dwell time). Here the selection bias on the feedback signal is strongly influenced by position bias, since items lower in the ranking are less likely to be discovered by the user. However, it can be shown that this bias can be estimated \cite{joachims2017unbiased,wang2018position,agarwal2019estimating}, and that the resulting estimates can serve as propensities in IPS-style estimators.

To connect the ranking setting with the contextual bandit setting more formally, now each context $x \sim P(\mathcal{X})$ represents a query and/or user profile. Given ranking function $\pi$, we use $\pi(x)$ to represent the ranking for query $x$. 
However, in the LTR setting, we no longer consider the actions atomic, but instead treat rankings as combinatorial actions where the reward decomposes as a weighted sum of component rewards. Formally speaking, the reward for rankings $\pi(x)$ is denoted as $\Delta(\pi(x)|x,r):=\sum_{d\in \mathbf{d}}\lambda(rank(d|\pi(x)))r(x,d)$, where $\lambda(\cdot)$ is a function that maps a rank to a score, $\mathbf{d}$ is the candidate set for query $x$ and $rank(d|\pi(x))$ represents the rank of document $d$ in the candidate set $\mathbf{d}$ under the ranking policy $\pi$ given context $x$, and $r(x,d)\in\{0,1\}$ is the relevance indicator.
Then we can define the overall reward for a ranking policy $\pi$ as
\begin{equation}
    R(\pi) = \int\Delta(\pi(x)|x,r)dP(x)
\end{equation}
Note that in this partial information setting we typically do not observe rewards for all (query, document) pairs. The only observable reward per component may be whether the user clicks the document or not, $c(x,\pi_0(x),d)\in\{0,1\}$, and there is inherent ambiguity whether the lack of a click means lack of relevance or lack of discovery. Here we use a latent variable $o(x,\pi_0(x),d)\in\{0,1\}$ to represent whether the user $x$ observes document $d$ under the logging policy $\pi_0$, which then leads to the following click model: a user clicks a document when the user observes it and the document is relevant, $c(x,\pi_0(x),d) = r(x,d)\cdot o(x,\pi_0(x),d)$.
Note that the examination $o(x,\pi_0(x),d)$ is not observed by the system, but one can estimate a missingness model \cite{joachims2017unbiased}, and use $p(x,\pi_0(x),d)$ be the (estimated) probability of $\indicator{\{o(x,\pi_0(x),d)=1\}}$. We denote this
probability value as the propensity of the observation.
In practice one could estimate the propensities as outlined in \cite{agarwal2019estimating, Fang/etal/19}. The logged data we get is in the format of $\mathcal{S}=\{\{(x_i,d_{ij},p_{ij},c_{ij})\}_{j=1}^{m_{i}}\}_{i=1}^n$ where $m_i$ is the number of candidates for context $x_i$, $p_{ij}$ is $p(x_i,\pi_0(x_i),d_{ij})$ and $c_{ij}$ is $c(x_i,\pi_0(x_i),d_{ij})$. For additive ranking metrics, various estimators in the \classname\ can be written in the form
\begin{equation*}
\begin{split}
\Rw(\pi)=\frac{1}{n} \sum_{i=1}^n\sum_{j=1}^{m_i}&\bigg[w_{ij}^{\alpha}\alpha_{ij}+o_{ij}w_{ij}^{\beta}\beta_{ij} + o_{ij}w_{ij}^{\gamma} \gamma_{ij} \bigg]\cdot
\lambda(rank(d_{ij}|x_i,\pi(x_i)))\\
\hspace*{-0.5cm}\mbox{with}\quad
\alpha_{ij}:= &\hat{\delta}(x_i,d_{ij}),
\beta_{ij}  := \frac{r_{ij}}{p_{ij}},  \gamma_{ij}:= \frac{\hat{\delta}(x_i,d_{ij})}{p_{ij}}.
\end{split}
\end{equation*}
where $w_{ij}^{\alpha}$, $w_{ij}^{\beta}$ and $w_{ij}^{\gamma}$ are the weight functions of the three components of the \classname. 
 Given a perfect reward model and logged propensities, we can get unbiased estimate of additive ranking metrics if the weight functions sum to 1. The weights of various estimators in this setting are shown in Table~\ref{tab:LTR_weights}.

\begin{table}[ht]
    \centering
    \caption{The weight functions for different estimators of the family $\Rw(\pi)$ in the ranking setting.}
    \label{tab:LTR_weights}
    \setlength{\tabcolsep}{4pt}
    \begin{tabular}{|l|c|c|c|} \hline
    Estimator & $w_{ij}^{\alpha}$ & $w_{ij}^{\beta}$ & $w_{ij}^{\gamma}$ \\ \hline
    DM    &  1 & 0 & 0 \\
    IPS   &  0 & 1 & 0 \\
    cIPS  &  0 & $\min\!\left\{\!M p_{ij},\!1\!\right\}$ & 0 \\
    SB    &  $1-\tau$ & $\tau$ & 0 \\
    SWITCH&  $\mathds{1}{\left\{\!\frac{1}{p_{ij}}\!>\!M\! \right\}}$ & $\mathds{1}{\left\{\!\frac{1}{p_{ij}}\!\leq\! M \!\right\}}$ & 0 \\
    CAB&  $1-\min\!\left\{\!M p_{ij},\!1\!\right\}$ & $\min\!\left\{\!M p_{ij},\!1\!\right\}$ & 0 \\ \hline
    \end{tabular}
\end{table}


Note that estimators with $w^{\gamma}_{ij}\neq0$ (DR, CAB-DR) are not applicable in this setting since the third term $o_{ij}w_{ij}^{\gamma} \gamma_{ij}$ depends on $o_{ij}$, which is not observed nor fully captured by $c_{ij}$. However, the second term is computable since the unobserved $o_{ij}$ and $r_{ij}$ are captured by $c_{ij}$ through $o_{ij}\beta_{ij} = \frac{o_{ij}r_{ij}}{p_{ij}}=\frac{c_{ij}}{p_{ij}}$.
The SWITCH estimator is applicable for learning in this setting since the weights of the estimator do not depend on the ranking policy to be learned.

\section{Appendix: Proofs}
In this appendix, we provide proofs of the main theorems.
\subsection{Proof of Theorem 1}
\label{Appendix: prooftheo1}
\unibias*
\begin{proof}
For simplicity, we make the following notations throughout the proof. We let $\zeta:=\zeta(x,y)$ denote the multiplicative deviation of the propensity estimate from the true propensity model, and $\Delta:=\Delta(x,y)$ be the additive deviation of the reward model from the true reward. Recall
\begin{equation}
\zeta(x,y) = 1 -\frac{\pi_0(y|x)}{\hat{\pi}_0(y|x)}
\end{equation}
\begin{equation}
\Delta(x,y) = \hat{\delta}(x,y)-\delta(x,y).
\end{equation}
Moreover, the $\sigma^2_r:=\sigma^2_r(x,y)$ is used to denote the randomness in reward $r(x,y)$ with $\sigma^2_r(x,y) = \mV_r(r(x,y)|x,y)$. Moreover, we denote the true IPS weight $\frac{\pi(y|x)}{{\pi_0}(y|x)}$ as $c(x,y)$ with the estimated version being $\hat{c}(x,y)$.
Also, let $\wva:= \wxya, \wvb:=\wxb$ and $\wvg:=\wxg$ be the abbreviation for the weighting functions.

We will start the proof by calculating the expectation of three different components of $\Rw(\pi)$. For the $\alpha_{i\newy}$ component, this term is independent of the distribution of $y_i$, and we have:
\begin{equation}
\begin{split}
\mE\Big[\frac{1}{n}\sum_{i=1}^n\sum_{\newy \in \mathcal{Y}}\pi(\newy|x)\wa\alpha_{i\newy}\Big] = \mE_x\Big[\sum_{\newy\in\mathcal{Y}}\pi(\newy|x)\wxa\hat{\delta}(x,\newy)\Big] = \mE_x\mE_{y\sim\pi}\Big[\wxya(\delta+\Delta)\Big]
\end{split}
\end{equation}
For the IPS term $\beta_i$, with $x_i\sim P(\mX)$ and $y_i\sim \pi_0(\mY|x)$.
\begin{equation}
    \begin{split}
    \mE\Big[\frac{1}{n}\sum_{i=1}^n\pi(y_i|x_i)\wb\beta_i\Big]
        & = \mE_x\mE_{y\sim \pi_0}\mE_r\Big[\wxb\frac{\pi(y|x)}{\hat{\pi}_0(y|x)}r(x,y)\Big]\\
        & = \mE_x\mE_{y\sim \pi_0}\Big[\wxb \frac{\pi(y|x)}{\pi_0(y|x)}\frac{\pi_0(y|x)}{\hat{\pi}_0(y|x)}\delta\Big]\\
        & =  \mE_x\mE_{y\sim \pi_0}\Big[c\wxb(1-\zeta)\delta\Big]\\
        & = \mE_x\mE_{y\sim \pi}\Big[\wxb(1-\zeta)\delta\Big]
    \end{split}
\end{equation}
where the second equation follows from the fact that conditioning on $(x,y)$, $\mE_r[r(x,y)|x,y]=\delta(x,y)$.
For the third term $\gamma_i$, we have
\begin{equation}
    \begin{split}
        \mE\Big[\frac{1}{n}\sum_{i=1}^n\pi(y_i|x_i)\wg \gamma_i\Big]
        &= \mE_x\mE_{y\sim \pi_0}\Big[\wxg\frac{\pi(y|x)}{\hat{\pi}_0(y|x)}\hat{\delta}(x,y)\Big]\\
        & = \mE_x\mE_{y\sim \pi_0}\Big[\wxg\frac{\pi(y|x)}{\pi_0(y|x)}\frac{\pi_0(y|x)}{\hat{\pi}_0(y|x)}(\delta+\Delta)\Big]\\
        & =  \mE_x\mE_{y\sim \pi_0}\Big[c\wxg(1-\zeta)(\delta+\Delta)\Big]\\
        & =  \mE_x\mE_{y\sim \pi}\Big[\wxg(1-\zeta)(\delta+\Delta)\Big]
    \end{split}
\end{equation}
Combining these three terms and using the formula that $Bias(\Rw(\pi)) = \mE[\Rw(\pi)] - \mE_x\mE_{y\sim \pi}\mE_r[r]$, we have
\begin{equation}
    \begin{split}
        Bias(\Rw(\pi))  = \mE_x\mE_{y\sim\pi}\Big[\wva\Delta - \wvb\zeta\delta + \wvg(\Delta-\zeta(\delta+\Delta))+(\wva+\wvb+\wvg)\delta - \delta\Big]
    \end{split}
\end{equation}
\end{proof}

\subsection{Proof of Theorem 2}
\label{Appendix: prooftheo2}
\univariance*
\begin{proof}
We follow the same notation as in Appendix~\ref{Appendix: prooftheo1}. Let $\Rwi(\pi):= \sum_{\bar{y}\in\mathcal{Y}} \pi(\bar{y}|x_i) \: w_{i\bar{y}}^{\alpha} \: \alpha_{i\bar{y}} + \pi(y_i|x_i) \: w_i^{\beta} \: \beta_i
+  \pi(y_i|x_i) \: w_i^{\gamma} \: \gamma_i$ with the abbreviated version defined as $\rp$, and it is easy to see that $\mV(\Rw(\pi))=\frac{1}{n}\mV(\Rwi(\pi))$.
\begin{equation}
\begin{split}
    \mV(\Rwi(\pi)) & = \mV_x\Big(\mE_{y\sim\pi_0,r}[\rp|x]\Big) + \mE_x\Big[\mV_{y\sim\pi_0,r}(\rp|x)\Big] \\
    & = \mV_x\Big(\mE_{y\sim\pi_0,r}[\rp|x]\Big) + \mE_x\Big[\mE_{y\sim\pi_0}[\mV_r(\rp|x,y)|x]\Big] + \mE_x\Big[\mV_{y\sim\pi_0}(\mE_r[\rp|x,y]|x)\Big]
\end{split}
\end{equation}
For the first term, using the bias formula in Appendix~\ref{Appendix: prooftheo1}, it is easy to see that
\begin{equation}
    \mV_x\Big(\mE_{y\sim\pi_0,r}[\rp|x]\Big) = \mV_x\Big(\mE_{y \sim\pi}[\wxya\Delta - \wxb\zeta\delta + \wxg(\Delta-\zeta(\delta+\Delta))+(\wxa+\wxb+\wxg)\delta|x]\Big)
\end{equation}
For the second term, we will calculate $\mV_r(\rp|x,y)$ first.
\begin{equation}
\begin{split}
  \mV_r(\rp|x,y) & = \mV_r\Big(\wxb\frac{\pi(y|x)}{\hat{\pi}_0(y|x)}r(x,y)|x,y\Big)\\
  & = \mV_r\Big(\wxb\frac{\pi(y|x)}{\pi_0(y|x)}\frac{\pi_0(y|x)}{\hat{\pi}_0(y|x)}r|x,y\Big)\\
  & = c^2(\wxb)^2(1-\zeta)^2\mV_r(r|x,y)\\
  & = c^2(\wxb)^2(1-\zeta)^2\sigma^2_r
\end{split}
\end{equation}
where the first equality follows from the fact that conditioning on $(x,y)$, $\sum_{\bar{y}\in\mathcal{Y}} \pi(\bar{y}|x) \: \wxa \: \alpha_{x\bar{y}} +\pi(y|x) \: \wxg \: \gamma_{xy}$ is just a constant, and we use the formula $\mV(a+X)=\mV(X)$ for any constant $a$, random variable $X$.

Then for the term $\mE_x\Big[\mE_{y\sim \pi_0}[\mV_r(\rp|x,y)|x]\Big]$, we have
\begin{equation}
\begin{split}
  \mE_x\Big[\mE_{y\sim \pi_0}[\mV_r(\rp|x,y)|x]\Big] & = \mE_x\mE_{y\sim \pi_0}\Big[c^2(\wxb)^2(1-\zeta)^2\sigma^2_r\Big]\\
  & = \mE_x\mE_{y\sim \pi}\Big[c(\wxb)^2(1-\zeta)^2\sigma^2_r\Big]
\end{split}
\end{equation}

Similarly, for the third term, we will calculate $\mE_r[\rp|x,y]$ first.
\begin{equation}
\begin{split}
  \mE_r[\rp|x,y] & = \mE_r\Big[\sum_{\newy \in \mY}\pi(\newy|x)\wxa\alpha_{i\newy}+\pi(y|x)\wxb\beta_{xy}+\pi(y|x)\wxg\gamma_{xy}|x,y\Big]\\
  & =\sum_{\newy\in\mathcal{Y}}\pi(\newy|x)\wxa\hat{\delta}(x,\newy)+ \mE_r\Big[\wxb\frac{\pi(y|x)}{\pi_0(y|x)}\frac{\pi_0(y|x)}{\hat{\pi}_0(y|x)}r|x,y\Big]+\wxg\frac{\pi(y|x)}{\pi_0(y|x)}\frac{\pi_0(y|x)}{\hat{\pi}_0(y|x)}(\delta+\Delta)\\
  & = \sum_{\newy\in\mathcal{Y}}\pi(\newy|x)\wxa\hat{\delta}(x,\newy)+ \wxb\frac{\pi(y|x)}{\pi_0(y|x)}\frac{\pi_0(y|x)}{\hat{\pi}_0(y|x)}\delta+\wxg\frac{\pi(y|x)}{\pi_0(y|x)}\frac{\pi_0(y|x)}{\hat{\pi}_0(y|x)}(\delta+\Delta)\\
  & = \sum_{\newy\in\mathcal{Y}}\pi(\newy|x)\wxa\hat{\delta}(x,\newy)+ \wxb c(1-\zeta)\delta+\wxg c(1-\zeta)(\delta+\Delta)
\end{split}
\end{equation}
For the term $\mV_{y\sim\pi_0}\Big(\mE_r[\rp|x,y]|x\Big)$, since the first term $ \sum_{\newy\in\mathcal{Y}}\pi(\newy|x)\wxa\hat{\delta}(x,\newy)$ is independent of $y$, then we have
\begin{equation}
    \begin{split}
        \mV_{y\sim\pi_0}\Big(\mE_r[\rp|x,y]|x\Big) & = \mV_{y\sim\pi_0}\Big(\sum_{\newy\in\mathcal{Y}}\pi(\newy|x_i)\wxa\hat{\delta}(x_i,\newy)+ \wxb c(1-\zeta)\delta+\wxg c(1-\zeta)(\delta+\Delta)|x\Big)\\
        &=\mV_{y\sim\pi_0}\Big( \wxb c(1-\zeta)\delta+\wxg c(1-\zeta)(\delta+\Delta)|x\Big)
    \end{split}
\end{equation}
Then taking the outer expectation over $x$, we have:
\begin{equation}
    \mE_x\Big[\mV_{y\sim\pi_0}(\mE_r[\rp|x,y]|x)\Big] = \mE_x\Big[\mV_{y\sim\pi_0}( \wxb c(1-\zeta)\delta+\wxg c(1-\zeta)(\delta+\Delta)|x)\Big]
\end{equation}
Summing all the three terms together, and using the formula $\mV(\Rw(\pi))=\frac{1}{n}\mV(\Rwi(\pi))$ for $i.i.d$ $\Rwi$, we have:
\begin{equation}
\begin{split}
    \mV(\Rw(\pi)) =  & \frac{1}{n}\Big\{ \mathbb{V}_x\Big(\mathbb{E}_{\pi}[\wva\Delta - \wvb\zeta\delta + \wvg(\Delta-\zeta(\delta+\Delta))+(\wva+\wvb+\wvg)\delta]\Big)\\
    &+\mathbb{E}_{x}\mathbb{E}_{\pi}\Big[(\wvb)^2c(1-\zeta)^2\sigma^2_r\Big]+\mathbb{E}_{x}\Big[\mathbb{V}_{\pi_0}(\wvb c(1-\zeta)\delta+\wvg c(1-\zeta)(\delta+\Delta))\Big]\Big\}\\
    \end{split}
\end{equation}
\end{proof}

\subsection{Proof of Theorem 3}
\label{Appendix:prooftheo3}
\textbf{Theorem 3} (Bias of CAB). \textit{For contexts $x_1, x_2, \cdots, x_n$ drawn i.i.d from some distribution $P(\mX)$ and for actions $y_i\sim\pi_0(\mY|x_i)$, under Condition 1 the bias of $\Rcab(\pi)$ is}
\begin{equation}
\mathbb{E}_{x}\mathbb{E}_{\pi}[-\delta\zeta\indicator{\{\hat{c}\leq M\}} + \{\Delta(1-\frac{M}{c(1-\zeta)})-\frac{M}{c(1-\zeta)}\delta\zeta\} \indicator{\{\hat{c}>M\}}]
\end{equation}

\begin{proof}
Note CAB falls into the class of counterfactual estimator with the weighting functions $\wa = 1- \min\!\left\{M\frac{ \hat{\pi}_0(\newy|x_i)}{\pi(\newy|x_i)},1\right\}, \wb =\min\!\left\{M\frac{ \hat{\pi}_0(y_i|x_i)}{\pi(y_i|x_i)},1\right\}, \wg=0$.

Using Theorem~\ref{theo:biasunified}, the bias for $\Rcab(\pi)$ is:
\begin{equation}
\begin{split}
Bias(\Rcab(\pi)) &=\mE_x\mE_{y\sim\pi}\Big[\wxya\Delta - \wxb\zeta\delta + \wxg(\Delta-\zeta(\delta+\Delta))+(\wxa+\wxb+\wxg)\delta - \delta\Big]\\
&=\mathbb{E}_{x}\mathbb{E}_{y\sim\pi}\Big[(1- \min\{\frac{M}{\hat{c}(x,y)},1\})\Delta-\min\{\frac{M}{\hat{c}(x,y)},1\}\zeta\delta\Big]\\
&=\mathbb{E}_{x}\mathbb{E}_{y\sim\pi}\Big[-\zeta\delta\mCs+\{(1-\frac{M}{\hat{c}(x,y)})\Delta-\frac{M}{\hat{c}(x,y)}\zeta\delta\}\mCg\Big]\\
&=\mathbb{E}_{x}\mathbb{E}_{y\sim\pi}\Big[-\zeta\delta\mCs+\{(1-\frac{M}{c(1-\zeta)})\Delta-\frac{M}{c(1-\zeta)}\zeta\delta\}\mCg\Big]\\
\end{split}
\end{equation}
while the last equality follows from the fact that $\hat{c}(x,y):=\frac{\pi(y|x)}{\hat{\pi}_0(y|x)}=\frac{\pi(y|x)}{\pi_0(y|x)}\frac{\pi_0(y|x)}{\hat{\pi}_0(y|x)}=c(x,y)(1-\zeta(x,y))$
\end{proof}

\subsection{Proof of Theorem 4}
\label{Appendix:prooftheo4}
\textbf{Theorem 4} (Variance of CAB). \textit{Under the same conditions as in Theorem~\ref{theo:biasblend}, the variance of $\Rcab(\pi)$}
\begin{equation}
    \begin{split}
 \mV(\Rcab(\pi))&= \frac{1}{n}\Big\{ \mathbb{V}_x(\mathbb{E}_{\pi}[\delta -\delta\zeta\indicator{\{\hat{c}\leq M\}}+(\Delta(1-\frac{M}{c(1-\zeta)})-\frac{M}{c(1-\zeta)}\delta\zeta) \indicator{\{\hat{c}>M\}}])\\
        &+\mathbb{E}_{x}\mathbb{E}_{\pi}[c(1-\zeta)^2\sigma^2_r\indicator{\{\hat{c}\leq M\}} + \frac{M^2}{c}\sigma^2_r\indicator{\{\hat{c}> M\}}]+\mathbb{E}_{x}[\mathbb{V}_{\pi_0}(c(1-\zeta)\delta\indicator{\{\hat{c}\leq M\}}+M\delta\indicator{\{\hat{c}> M\}})]\Big\}\\
    \end{split}
\end{equation}

\begin{proof}
The result follows by plugging in the weighting function for CAB with $\wa = 1- \min\!\left\{M\frac{ \hat{\pi}_0(\newy|x_i)}{\pi(\newy|x_i)},1\right\}, \wb =\min\!\left\{M\frac{ \hat{\pi}_0(y_i|x_i)}{\pi(y_i|x_i)},1\right\}, \wg=0$ in Theorem~\ref{theo:varianceunified}.

For the term $\mathbb{V}_x\Big(\mathbb{E}_{\pi}[\wva\Delta - \wvb\zeta\delta + \wvg(\Delta-\zeta(\delta+\Delta))+(\wva+\wvb+\wvg)\delta]\Big)$, using the result from Theorem~\ref{theo:biasblend}, we have:
\begin{equation}
\begin{split}
    &  \mathbb{V}_x\Big(\mathbb{E}_{\pi}[\wva\Delta - \wvb\zeta\delta + \wvg(\Delta-\zeta(\delta+\Delta))+(\wva+\wvb+\wvg)\delta]\Big)\\
    &=\mathbb{V}_x\Big(\mathbb{E}_{\pi}[\delta -\delta\zeta\indicator{\{\hat{c}\leq M\}}+(\Delta(1-\frac{M}{c(1-\zeta)})-\frac{M}{c(1-\zeta)}\delta\zeta) \indicator{\{\hat{c}>M\}}]\Big)
\end{split}
\end{equation}

For the term $\mathbb{E}_{x}\mathbb{E}_{\pi}\Big[(\wvb)^2c(1-\zeta)^2\sigma^2_r\Big]$, we have
\begin{equation}
    \begin{split}
        \mathbb{E}_{x}\mathbb{E}_{\pi}\Big[(\wvb)^2c(1-\zeta)^2\sigma^2_r\Big] & = \mathbb{E}_{x}\mathbb{E}_{\pi}\Big[\min\{(\frac{M}{\hat{c}(x,y)})^2,1\}c(1-\zeta)^2\sigma^2_r\Big]\\
        &=\mE_x\mE_{\pi}\Big[c(1-\zeta)^2\sigma^2_r\mCs + \frac{M^2}{\hat{c}^2(x,y)}c(1-\zeta)^2\sigma^2_r\mCg\Big]\\
        &=\mE_x\mE_{\pi}\Big[c(1-\zeta)^2\sigma^2_r\mCs + \frac{M^2}{c^2(1-\zeta)^2}c(1-\zeta)^2\sigma^2_r\mCg\Big]\\
        &=\mE_x\mE_{\pi}\Big[c(1-\zeta)^2\sigma^2_r\mCs + \frac{M^2}{c}\sigma^2_r\mCg\Big]
    \end{split}
\end{equation}

For the last term $\mathbb{E}_{x}\Big[\mathbb{V}_{\pi_0}(\wvb c(1-\zeta)\delta+\wvg c(1-\zeta)(\delta+\Delta))\Big]$, then
\begin{equation}
    \begin{split}
        \mathbb{E}_{x}\Big[\mathbb{V}_{\pi_0}(\wvb c(1-\zeta)\delta)\Big] &= \mathbb{E}_{x}\Big[\mathbb{V}_{\pi_0}(\min\{\frac{M}{\hat{c}(x,y)},1\}c(1-\zeta)\delta)\Big]\\
        &=\mE_x\Big[\mV_{\pi_0}(c(1-\zeta)\delta\mCs + \frac{M}{\hat{c}(x,y)}c(1-\zeta)\delta\mCg)\Big]\\
        &=\mE_x\Big[\mV_{\pi_0}(c(1-\zeta)\delta\mCs + M\delta\mCg)\Big]\\
    \end{split}
\end{equation}
Combining all, we have
\begin{equation}
    \begin{split}
 \mV(\Rcab(\pi))&= \frac{1}{n}\Big\{ \mathbb{V}_x\Big(\mathbb{E}_{\pi}[\delta -\delta\zeta\indicator{\{\hat{c}\leq M\}}+(\Delta(1-\frac{M}{c(1-\zeta)})-\frac{M}{c(1-\zeta)}\delta\zeta) \indicator{\{\hat{c}>M\}}]\Big)\\
        &+\mathbb{E}_{x}\mathbb{E}_{\pi}\Big[c(1-\zeta)^2\sigma^2_r\indicator{\{\hat{c}\leq M\}} + \frac{M^2}{c}\sigma^2_r\indicator{\{\hat{c}> M\}}\Big]+\mathbb{E}_{x}\Big[\mathbb{V}_{\pi_0}(c(1-\zeta)\delta\indicator{\{\hat{c}\leq M\}}+M\delta\indicator{\{\hat{c}> M\}})\Big]\Big\}\\
    \end{split}
\end{equation}
\end{proof}

\subsection{Proof of Bias and Variance of CAB-DR}
\label{Appendix:prooftheo5}
\textbf{Theorem 5} (Bias of CAB-DR). \textit{For contexts $x_1,x_2,\cdots,x_n$ drawn i.i.d from some distribution $P(\mX)$ and for actions $y_i\sim\pi_0(\mY|x_i)$, under Condition~\ref{as:commonsupport} the bias of $\Rcabdr(\pi)$ is}
\begin{equation}
\begin{split}
\mE_x\mE_{\pi}\Big[\zeta\Delta\indicator{\{\hat{c}\leq M\}}+\Delta(1-\frac{M}{c}) \indicator{\{\hat{c}> M\}}\Big]\\
\end{split}
\end{equation}

\begin{proof}
CAB-DR is also an instance in the \classname\ with the weighting function: $\wa = 1, \wb =\min\!\left\{M\frac{ \hat{\pi}_0(y_i|x_i)}{\pi(y_i|x_i)},1\right\}, \wg=- \min\!\left\{M\frac{ \hat{\pi}_0(y_i|x_i)}{\pi(y_i|x_i)},1\right\}$. Using Theorem~\ref{theo:biasunified}, the bias for CAB-DR is:
\begin{equation}
    \begin{split}
        Bias(\hat{R}_{CABDR}(\pi)) &= \mE_x\mE_{\pi}\Big[\wva \Delta - \wvb\zeta\delta + \wvg(\Delta-\zeta(\delta+\Delta))+(\wva+\wvb+\wvg)\delta - \delta\Big]\\
        &=\mE_x\mE_{\pi}\Big[\Delta - \min\{\frac{M}{\hat{c}(x,y)},1\}\zeta\delta -\min\{\frac{M}{\hat{c}(x,y)},1\}(\Delta-\zeta(\delta+\Delta))\Big]\\
        &=\mE_x\mE_{\pi}\Big[\Delta  -\min\{\frac{M}{\hat{c}(x,y)},1\}(\Delta-\zeta\Delta)\Big]\\
        &=\mE_x\mE_{\pi}\Big[\zeta\Delta\mCs + \{\Delta[1-\frac{M}{\hat{c}(x,y)}(1-\zeta)]\}\mCg\Big]\\
        &=\mE_x\mE_{\pi}\Big[\zeta\Delta\indicator{\{\hat{c}\leq M\}}+\Delta(1-\frac{M}{c}) \indicator{\{\hat{c}> M\}}\Big]
    \end{split}
\end{equation}
\end{proof}

\textbf{Theorem 6} (Variance of CAB-DR). \textit{Under the same conditions as in Theorem~\ref{theo:biasblend}, the variance of $\Rcabdr(\pi)$}
\begin{equation}
\begin{split}
\mV(\Rcabdr(\pi)) &= \frac{1}{n}\Big\{ \mV_x\Big(\mE_{\pi}[\delta+\zeta\Delta\indicator{\{\hat{c}\leq M\}}+\Delta(1-\frac{M}{c}) \indicator{\{\hat{c}> M\}}]\Big)\\ &+\mathbb{E}_{x}\mathbb{E}_{\pi}\Big[c(1-\zeta)^2\sigma^2_r\indicator{\{\hat{c}\leq M\}} + \frac{M^2}{c}\sigma^2_r\indicator{\{\hat{c}> M\}}\Big]\\
&+\mathbb{E}_{x}\Big[\mathbb{V}_{\pi_0}(c(1-\zeta)(-\Delta)\indicator{\{\hat{c}\leq M\}}-M\Delta\indicator{\{\hat{c}> M\}})\Big]\Big\}\\
\end{split}
\end{equation}
\begin{proof}
The proof follows by using Theorem~\ref{theo:varianceunified} with the weights $\wa = 1, \wb =\min\!\left\{M\frac{ \hat{\pi}_0(y_i|x_i)}{\pi(y_i|x_i)},1\right\}, \wg=- \min\!\left\{M\frac{ \hat{\pi}_0(y_i|x_i)}{\pi(y_i|x_i)},1\right\}$.

For the first term, following directly from Appendix~\ref{Appendix:prooftheo5}, it is easy to see
\begin{equation}
   \mathbb{V}_x\Big(\mathbb{E}_{\pi}[\wva\Delta - \wvb\zeta\delta + \wvg(\Delta-\zeta(\delta+\Delta))+(\wva+\wvb+\wvg)\delta]\Big)= \mV_x\Big(\mE_{\pi}[\delta+\zeta\Delta\indicator{\{\hat{c}\leq M\}}+\Delta(1-\frac{M}{c}) \indicator{\{\hat{c}> M\}}]\Big)
\end{equation}

For the second term $\mathbb{E}_{x}\mathbb{E}_{\pi}[(\wvb)^2c(1-\zeta)^2\sigma^2_r]$, since CAB and CAB-DR has the same weighting function $\wvb$, this term is exactly the same for CAB and CAB-DR.

For the third term $\mathbb{E}_{x}\Big[\mathbb{V}_{\pi_0}(\wvb c(1-\zeta)\delta+\wvg c(1-\zeta)(\delta+\Delta))\Big]$, we have
\begin{equation}
\begin{split}
    \mathbb{E}_{x}\Big[\mathbb{V}_{\pi_0}(\wvb c(1-\zeta)\delta+\wvg c(1-\zeta)(\delta+\Delta))\Big]
    &=\mathbb{E}_{x}\Big[\mathbb{V}_{\pi_0}(\min\{\frac{M}{\hat{c}(x,y)},1\}c(1-\zeta)\delta-\min\{\frac{M}{\hat{c}(x,y)},1\}c(1-\zeta)(\delta+\Delta))\Big]\\
    &=\mathbb{E}_{x}\Big[\mathbb{V}_{\pi_0}(-\min\{\frac{M}{\hat{c}(x,y)},1\}c(1-\zeta)\Delta)\Big]\\
    &=\mathbb{E}_{x}\Big[\mathbb{V}_{\pi_0}(c(1-\zeta)(-\Delta)\mCs - M\Delta\mCg)\Big]
\end{split}
\end{equation}
Combining all the three terms will give us the variance for $\Rcabdr(\pi)$.
\end{proof}

\section{Experiment Details}
\label{appendix: setup}
In this section, we provide experiment details for both the BLBF and LTR settings.
\subsection{BLBF}
In the BLBF experiment, specifically, given a supervised dataset $\{(x_i, y^{*}_i)\}^n_{i=1}$, where $x$ is $i.i.d$ drawn from a certain fixed distribution $P(\mathcal{X})$ and $y^{*}\in\{1,2,\cdots,k\}$ denotes the true class label. For a particular logging policy $\pi_0$, the logged bandit data is simulated by sampling $y_i \sim \pi_0(\mathcal{Y}|x_i)$ and a deterministic loss $r(x_i,y_i)$ is revealed. In our experiments, the loss is defined as $r(x_i,y_i) = \indicator{\{y_i \neq y^{*}_i\}}-1$. The resulting logged contextual bandit data $\mathcal{S}=\{x_i, y_i, r(x_i, y_i), \pi_0(y_i|x_i)\}$ is then used to evaluate the performance of different estimators.

For evaluation, we split each dataset equally into train and test sets. For the train set, we use $10\%$ of the full-information data to train the logger $\pi_0$ and loss predictor $\pi_r(x)$, with loss estimates defined by $\hat{\delta}(x_i,y) = \indicator{\{\pi_r(x_i)\neq y\}}-1$. The policy $\pi$ we want to evaluate is a multiclass logistic regression trained on the whole train set. Finally, we use the full-information test set to generate the contextual bandit datasets $\mathcal{S}$ for off-policy evaluation of sizes $n=200, 500, 2000$. We evaluate the policy $\pi$ with different estimators on the logged bandit feedback of different sizes and treat the performance on the full-information test set as ground truth $R(\pi)$. The performance is measured by MSE. We repeat each experiment 500 times and calculate the bias, variance and MSE.

For learning, we first split the original dataset into training (48\%), validation (32\%) and test sets (20\%). Following \cite{Swaminathan/Joachims/15c}, the policy we want to learn lies on the space $\mathcal{F}:=\{\pi_w:w\in\mathbb{R}^p\}$ with $\pi_w$ as the stochastic linear rules defined by:
\begin{equation}
    \pi_w(y|x) = \frac{\exp(w^T\phi(x,y))}{\mathbb{Z}(x)}
\end{equation}
Here, $\phi(x,y)$ denotes the joint feature map between context $x$ and action $y$, and $\mathds{Z}(x)$ is a normalization factor.
The training objective is defined by $ \pi^{est} = \argmin_{\pi_w \in \mathcal{F}}\hat{R}^{est}(\pi_w) + \lambda||w||_2$, where $\lambda$ is selected through the lowest $\hat{R}^{IPS}(\pi)$ on the validation set. To avoid local minimum, the objective is optimized via L-BFGS using scikit-learn with 10 random starts. The performance of the learned policy $\pi^{est}$ is measured via expected error on the test set, defined as: $\frac{1}{n_{test}}\sum_{i=1}^{n_{test}}\mE_{y_i\sim \pi^{est}(\mathcal{Y}|x_i)}[\indicator{\{y_i \neq y^{*}_i\}}]$. Similar to evaluation, we use 20\% of the training data to train the multiclass logistics regression as logging policy with default hyperparameter. While for the estimated loss $\hat{\delta}(x,y)$, we train the logistic regression using 10\% training data with tuned hyperparameter selected from the validation set. All the result is averaged over 10 runs with $n=5000$.

\subsection{LTR} \label{sec:genpropsvm}
In the LTR experiment, we use 10\% of the training set for learning a DM, which reflects that we typically have a small amount of manual relevance judgements. The DM is a binary Gradient Boosted Decision Tree Classifier calibrated by Sigmoid Calibration~\cite{platt1999probabilistic}. We use $\lambda(rank)=rank$ as the performance metric which can be interpreted as the average rank of the relevant results. The examination probability (propensity) that we use is $p(x,\pi_0(x),d) = \frac{1}{rank(d|\pi_0(x))}$.
For the evaluation experiments, to get a ranking policy for evaluation, we train a ranking SVM \cite{joachims2002optimizing} on the remaining 90\% training data. As input to the estimators, different amounts of click data are generated from the test set. For each experiment, we generate the log data 100 times and report the bias, variance, and MSE with respect to the estimated ground truth from the full-information test set.

For the learning experiments, we derived a concrete learning algorithm based on propensity SVM-Rank that conducts learning from biased user feedback using different estimators. The SVM-style algorithm~\cite{joachims2002optimizing,joachims2006training,joachims2017unbiased} optimizes an upper bound on different estimators and details are in Appendix~\ref{Appendix:svm-rank}. We compare the performance of different estimators using different amounts of simulated user feedback with the proposed learning algorithm.
As input to the propensity SVM-Rank, different amount of click data is simulated from the $90\%$ training data. Specifically, we present the performance using 1 sweep of the data in Table~\ref{datasets-table}. We grid search $C$ for propensity SVM-Rank and $M$ for different estimators and conduct hyperparameter selection with 90 percentile cIPS on user feedback data simulated from the validation set for 5 sweeps. All the experiments are run for 5 times and the average is presented.

\subsection{Generalized Propensity SVM-Rank}
\label{Appendix:svm-rank}
We now derive a concrete learning algorithm that conducts learning from biased user feedback using different estimators from the \classname. It is based on SVM-Rank~\cite{joachims2002optimizing,joachims2006training,joachims2017unbiased} but we expect other learning to rank methods can also be adapted to the estimators.

The generalized propensity SVM-Rank learns a linear scoring function $f(x,d)=w\cdot\phi(x,d)$ with $\phi(x,d)$ describing how context $x$ and document $d$ interact. It optimizes the following objective
\begin{equation}
    \begin{split}
        &\hat{w} = argmin_{w,\xi}\frac{1}{2}w\cdot w+
        \frac{C}{n}\sum_{i}\sum_{j}\bigg[w_{ij}^{\alpha}\alpha_{ij}+o_{ij}w_{ij}^{\beta} \beta_{ij}\bigg]\sum_{k\neq j}\xi_{ijk}\\
        &s.t.\quad \forall i, j, k\neq j\quad w\cdot [\phi(x_i,d_{ij})-\phi(x_i,d_{ik})]> 1-\xi_{ijk},\\ 
        & \hspace{0.83cm} \forall i,j,k\neq j \quad \xi_{ijk}\geq 0
    \end{split}
\end{equation}
where $w$ is the parameter of the generalized propensity SVM-Rank and $C$ is a regularization parameter. The training objective optimizes an upper bound on the estimator with average rank of positive examples metric($\lambda(rank) = rank$) since
\begin{align}
\begin{split}
    &\sum_i\sum_j\bigg[w_{ij}^{\alpha}\alpha_{ij}+o_{ij}w_{ij}^{\beta} \beta_{ij}\bigg](rank(d_{ij}|x_i,\pi(x_i))-1)\nonumber \\
    =&\sum_i\sum_j\bigg[w_{ij}^{\alpha}\alpha_{ij}+o_{ij}w_{ij}^{\beta} \beta_{ij}\bigg]\cdot \sum_{k\neq j}\indicator\{w\cdot[\phi(x_i,d_{ik})-\phi(x_i,d_{ij})]>0\}\nonumber \\
    \leq&\sum_i\sum_j\bigg[w_{ij}^{\alpha}\alpha_{ij}+o_{ij}w_{ij}^{\beta} \beta_{ij}\bigg]\cdot \sum_{k\neq j}\max(1-w\cdot[\phi(x_i,d_{ij})-\phi(x_i,d_{ik})],0)\\
    \leq&\sum_i\sum_j\bigg[w_{ij}^{\alpha}\alpha_{ij}+o_{ij}w_{ij}^{\beta} \beta_{ij}\bigg]\cdot\sum_{k\neq j}\xi_{ijk}
\end{split}
\end{align}
\end{document}